\documentclass[letterpaper]{article}
\usepackage{times}
\usepackage{helvet}
\usepackage{courier}
\usepackage{arxiv}
\usepackage{verbatim}

\usepackage[utf8]{inputenc} % allow utf-8 input
\usepackage[T1]{fontenc}    % use 8-bit T1 fonts
\usepackage{hyperref}       % hyperlinks
\usepackage{url}            % simple URL typesetting
\usepackage{booktabs}       % professional-quality tables

\usepackage{subfig}
\usepackage{nicefrac}       % compact symbols for 1/2, etc.
\usepackage{microtype}      % microtypography
\usepackage{graphicx}
\usepackage{natbib}
\usepackage{doi}

\usepackage{multirow, rotating}

\usepackage{amsmath, amsthm, amssymb}
\usepackage{geometry}
\usepackage{booktabs}
\usepackage{array}
\usepackage{xcolor}
\usepackage{enumitem}
\usepackage{tikz}
\usepackage{algorithm}
\usepackage{algpseudocode}
\usepackage{caption}
\usetikzlibrary{positioning, arrows.meta, calc, shapes.geometric}

\usepackage[final, commandnameprefix=ifneeded, authormarkup=none,
            addedmarkup=colored, deletedmarkup=colored]{changes}

\definechangesauthor[color=blue!70!black]{new}
\definechangesauthor[color=red!70!black]{old}

\makeatletter
\let\Ch@add\added
\let\Ch@del\deleted
\renewcommand{\added}[1]{\Ch@add[id=new]{#1}}
\renewcommand{\deleted}[1]{\Ch@del[id=old]{#1}}
\renewcommand{\replaced}[2]{%
  \Ch@add[id=new]{#1}\ifbool{Changes@optiondraft}{ }{}\Ch@del[id=old]{#2}}

\newenvironment{addedblock}
  {\ifbool{Changes@optiondraft}{\color{blue!70!black}}{}}
  {}
\makeatother

\hypersetup{
  colorlinks = true,
  linkcolor  = blue!60!black,
  citecolor  = blue!60!black,
  urlcolor   = blue!60!black
}

\theoremstyle{plain}
\newtheorem{theorem}{Theorem}[section]
\newtheorem{lemma}[theorem]{Lemma}
\newtheorem{proposition}[theorem]{Proposition}

\theoremstyle{definition}
\newtheorem{definition}[theorem]{Definition}
\newtheorem{example}[theorem]{Example}

\theoremstyle{remark}
\newtheorem{remark}[theorem]{Remark}

\newcommand{\HTS}{\mathrm{H2S}}
\newcommand{\STH}{\mathrm{S2H}}
\newcommand{\wstar}{w^{*}}

\newcommand{\SigHG}{\Sigma_{\mathrm{HG}}}
\newcommand{\iso}{\cong}

\newcommand{\IsalHG}{\textsc{IsalHG}}
\newcommand{\IsalGraph}{\textsc{IsalGraph}}
\newcommand{\IsalSR}{\textsc{IsalSR}}
\newcommand{\cdll}{\mathcal{L}}
\newcommand{\ptr}[1]{p_{#1}}
\newcommand{\levi}{B}

\algrenewcommand\algorithmiccomment[1]{\hfill\textcolor{gray!70!black}{\(\triangleright\) #1}}

\title{Instruction Set and Language for Hypergraphs}

\author{ \href{https://orcid.org/0009-0001-2178-4647}{\includegraphics[scale=0.06]{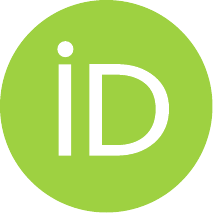}\hspace{1mm}Mario Pascual-Gonz\'alez} \\
    Department of Computer Languages and Computer Science\\
    University of M\'alaga\\
    Bulevar Louis Pasteur, 35\\
    29071 M\'alaga, Spain \\
	\texttt{mpascual@uma.es} \\
	\And
	\href{https://orcid.org/0000-0001-8231-5687}{\includegraphics[scale=0.06]{orcid.pdf}\hspace{1mm}Ezequiel L\'opez-Rubio}\thanks{Corresponding author. ITIS Software. Universidad de M\'alaga. C/ Arquitecto Francisco Peñalosa 18, 29010, Málaga, Spain} \\
	Department of Computer Languages and Computer Science\\
    University of M\'alaga\\
    Bulevar Louis Pasteur, 35\\
    29071 M\'alaga, Spain \\
	\texttt{ezeqlr@lcc.uma.es} \\
}

\renewcommand{\shorttitle}{Instruction Set and Language for Hypergraphs}

\hypersetup{
pdftitle={Instruction Set and Language for Hypergraphs},
pdfsubject={cs.DM, cs.DS},
pdfauthor={Mario Pascual-Gonz\'alez, Ezequiel L\'opez-Rubio},
pdfkeywords={hypergraph representation, hypergraph isomorphism, canonical form, instruction sequences, virtual machine, Levi graph}
}

\begin{document}
\maketitle

\begin{abstract}
We present \IsalHG{}, a method for representing the structure of any
finite, connected hypergraph of bounded hyperedge arity as a string over
a compact instruction alphabet $\SigHG$.  The encoding is executed by a
small virtual machine comprising a sparse hypergraph, a circular
doubly-linked list (CDLL) of node references, and $k$ traversal
pointers, where $k$ bounds the hyperedge arity.  Instructions either
move a pointer through the CDLL or insert a hyperedge, optionally
together with new nodes, into the hypergraph.  Every string over
$\SigHG$ decodes to a valid hypergraph; the alphabet is closed.  A greedy \emph{HypergraphToString} ($\HTS$) algorithm
encodes any connected hypergraph into a string; a backtracking variant
seeded at nodes of lexicographically maximal structural tuple produces a
\emph{canonical string} $\wstar_H$, which we \replaced{prove}{conjecture} to be a complete
isomorphism invariant.  Canonical-string equality then decides
hypergraph isomorphism natively, without the standard reduction to the
Levi incidence graph followed by a graph-isomorphism engine.  We verify
the round-trip property $\STH(\HTS(H)) \iso H$ on 150 connected random
uniform hypergraphs and on named combinatorial designs, and we benchmark
the canonical algorithm against the three practically available exact
baselines --- nauty, Traces, and bliss operating on the 2-coloured Levi
graph --- across a $(n, c)$ grid with ten seeds per cell.  All four
methods agree on every one of 600 isomorphism verdicts, \replaced{as the
completeness theorem requires}{consistent with
the completeness conjecture}.  On wall-clock time the Levi baselines
dominate every tested cell by three to five orders of magnitude
(geometric-mean ratio $311\times$ to $117{,}672\times$), which we report
as measured.  We
contribute the representation framework, \replaced{a proof that the
canonical string is complete}{a conjecture of canonical
completeness}, and the first native-versus-Levi benchmark for
hypergraph isomorphism.
\end{abstract}

% keywords can be removed
\keywords{hypergraph representation \and hypergraph isomorphism \and canonical form \and instruction sequences \and virtual machine \and Levi graph}

\section{Introduction\label{sec:introduction}}
Hypergraphs generalise graphs by allowing an edge to connect any number
of nodes rather than exactly two.  They are the natural model for group
interactions: co-authorship teams, chemical reactions, protein
complexes, social contact events, and legislative coalitions are all
sets of participants, not pairs \citep{berge1973graphs,
benson2018simplicial, battiston2020networks}.  As standard software for higher-order network
analysis has emerged
\citep{landry2023xgi, lotito2023hypergraphx}, the fundamental
representation question resurfaces at the hypergraph level: how should
the structure of a hypergraph be encoded so that structurally identical
objects can be recognised as such?  This is the hypergraph isomorphism
problem, and it underlies the classification of combinatorial designs
\citep{kaski2004steiner, colbourn2007handbook}, the deduplication of
hypergraph corpora, and the assessment of structural expressivity in
higher-order machine learning \citep{feng2024hypergraph}.

The established exact route to hypergraph isomorphism is a reduction.
The hypergraph $H$ is translated into its Levi incidence graph
$\levi(H)$: a bipartite graph with one vertex per node of $H$, one
vertex per hyperedge, and an edge whenever a node belongs to a hyperedge
\citep{berge1973graphs}.  Colouring the two vertex classes makes the
translation faithful, and a graph canonical-labelling engine --- nauty
or Traces \citep{mckay1981practical, mckay2014practical}, or bliss
\citep{junttila2007bliss} --- decides isomorphism on the reduced graph.
This pipeline is exact and mature; to our knowledge, it is the only exact hypergraph isomorphism procedure available as
working software: design-theory systems such as SageMath and GAP route
their incidence-structure isomorphism tests through nauty internally.
The pipeline is not \emph{native}.  The reduction inflates the
vertex set from $n$ to $n + m$, where $m$ is the number of hyperedges,
and the algorithm discards the hypergraph before any isomorphism reasoning
begins.  Native alternatives proposed so far are refinement-based
invariants in the Weisfeiler--Leman family, which are provably
incomplete \citep{feng2024hypergraph, zhang2025expressivity}, and
group-theoretic exact algorithms, which remain theoretical
\citep{babai2008hypergraph, neuen2022hypergraph,
schweitzer2019unifying}.

This paper introduces \IsalHG{} (Instruction Set and Language for
Hypergraphs), a native sequential representation of hypergraphs.  We encode a
hypergraph as a string over the instruction alphabet $\SigHG$
and execute it with a small virtual machine comprising a sparse
hypergraph, a circular doubly-linked list of node references, and $k$
traversal pointers, where $k$ bounds the hyperedge arity.  Every
string over $\SigHG$ decodes to a valid hypergraph.  A greedy
HypergraphToString algorithm ($\HTS$) encodes any connected hypergraph;
a backtracking variant seeded at nodes of maximal structural tuple
computes a canonical string $\wstar_H$, which we \replaced{prove}{conjecture} to be a
complete isomorphism invariant.  \replaced{Canonical-string equality therefore
decides}{Under this conjecture, canonical-string equality decides}
hypergraph isomorphism within the hypergraph domain.  \IsalHG{} is the third member of an
instruction-set representation family, after \IsalGraph{} for finite
simple graphs \citep{lopezrubio2025isalgraph, isalgraph2026} and
\IsalSR{} for the labelled directed acyclic graphs of symbolic
regression \citep{isalsr2026}.

We make three contributions.  First, we specify the alphabet, the
virtual machine, and the $\STH$/$\HTS$ algorithm pair, and motivate the design decisions.  Second, we \replaced{state and prove the round-trip
and canonical-completeness properties: the encoder and the interpreter
invert each other up to isomorphism, and the canonical string is a
complete isomorphism invariant}{state the round-trip and
canonical-completeness properties as explicit conjectures, argue both
directions informally, and defer the formal proofs to a dedicated
theoretical paper}.  Third, we report the first direct benchmark of
a native canonical-string method against the Levi route on hypergraphs:
on 150 connected random uniform hypergraphs spanning $n \in \{8, \dots,
25\}$ nodes, the round-trip property holds in every instance, all four
methods (\IsalHG{}, nauty, Traces, bliss) agree on all 600 isomorphism
verdicts, and the Levi baselines outperform the current canonical
algorithm by three to five orders of magnitude in wall-clock time.  We
report the runtime gap as measured; we characterise where the native encoding stands relative to
the reduction and make no claim of superiority over it.

The paper is organised as follows.  Section~\ref{sec:related} positions
\IsalHG{} against exact, approximate, and sequential prior work.
Section~\ref{sec:methodology} defines the alphabet, the virtual machine,
both conversion algorithms, and the \replaced{two properties and their
proofs}{conjectured properties}.
Section~\ref{sec:experiments} describes the data cohort and the
experimental protocol.  Section~\ref{sec:results} reports the round-trip,
agreement, and runtime results, which Section~\ref{sec:discussion}
discusses.  Section~\ref{sec:conclusion} concludes.

\section{Related work\label{sec:related}}
Prior work relevant to \IsalHG{} falls into four strands: the practical
exact graph canonical-labelling tools that reach hypergraphs through the
Levi reduction (\S\ref{sec:rel-exact-tools}), exact hypergraph
isomorphism algorithms that exist only in theory
(\S\ref{sec:rel-exact-theory}), Weisfeiler--Leman-style hypergraph
invariants that are native but incomplete (\S\ref{sec:rel-wl}), and
sequential encodings of combinatorial structures
(\S\ref{sec:rel-sequential}).  The first strand supplies our baselines;
the second and third explain why no other baseline is available; the
fourth contains the representation tradition \IsalHG{} extends.

\subsection{Exact graph canonical labelling and the Levi reduction}
\label{sec:rel-exact-tools}

Practical graph isomorphism is dominated by canonical-labelling tools
built on the individualisation--refinement (IR) paradigm: colour
refinement partitions the vertices, individualisation branches on the
cells the refinement cannot split, and a search tree over these branches
yields a canonical labelling.  nauty introduced the modern form of this
architecture \citep{mckay1981practical}; its sibling Traces replaced the
depth-first traversal with a breadth-first strategy and a different cell
selector, which pays off on highly symmetric inputs; both are maintained
and distributed together \citep{mckay2014practical}.  bliss refined the
same paradigm for large and sparse graphs \citep{junttila2007bliss}.
These engines decide isomorphism for graphs with millions of vertices in
practice; hard instances for all three are now well
characterised \citep{neuen2017benchmark}.  On the theory side, graph
isomorphism is decidable in quasipolynomial time
\citep{babai2016graph}, but the IR engines, with exponential worst
cases, remain the practical standard.

Hypergraphs enter this ecosystem through a reduction.  The Levi graph
$\levi(H)$ of a hypergraph $H$ with $n$ nodes and $m$ hyperedges is the
bipartite incidence graph on $n + m$ vertices in which each hyperedge
vertex is adjacent to the vertices of the nodes it contains
\citep{berge1973graphs}.  Colouring node-vertices and hyperedge-vertices
with two distinct colours makes the reduction faithful: two hypergraphs
are isomorphic exactly when their coloured Levi graphs are.  Every
software system we are aware of that decides hypergraph or
incidence-structure isomorphism exactly follows this route.  SageMath's
\texttt{IncidenceStructure.is\_isomorphic} and GAP's design-theory
packages invoke nauty internally, and the classification campaigns of
design theory --- for example the enumeration of the 11{,}084{,}874{,}829
Steiner triple systems of order 19 --- combine nauty-based isomorph
rejection with domain-specific invariants \citep{kaski2004steiner,
colbourn2007handbook}.  For hypergraph isomorphism, therefore, the
operational exact standard is the Levi reduction plus one of the three
engines above, and any native proposal must measure itself against that
pipeline.

\subsection{Exact hypergraph isomorphism in theory}
\label{sec:rel-exact-theory}

Hypergraph isomorphism is polynomial-time equivalent to graph
isomorphism: the Levi reduction maps hypergraphs to coloured graphs, and
graphs are hypergraphs of arity two.  The theoretical literature has
nevertheless sought algorithms whose complexity respects hypergraph
parameters instead of the inflated size $n + m$ of the reduction.
\citet{luks1999hypergraph} gave an algorithm exponential in the number
of nodes but polynomial in the number of hyperedges.
\citet{babai2008hypergraph} handle hypergraphs of bounded rank $k$ in
moderately exponential time $\exp(\tilde{O}(k^2 \sqrt{n}))$.
\citet{arvind2015colored} showed that colored hypergraph isomorphism is
fixed-parameter tractable in the maximum colour-class size.
\citet{neuen2022hypergraph} obtained the currently best bound,
$(n+m)^{O((\log d)^{c})}$ for groups with restricted composition
factors, while noting that the dependence on $m$ remains far from
optimal for hyperedge-rich inputs.  \citet{schweitzer2019unifying}
proposed a unifying canonisation framework over hereditarily finite
sets that canonises hypergraphs within the same asymptotic budget as
the group-theoretic algorithms.

The relevant point here is practical, not asymptotic: none of these five algorithms has a public implementation.
They are exact and, in spirit, native, but they cannot be run.  The
practical exact toolbox for hypergraph isomorphism thus contains
precisely the Levi reduction driven by nauty, Traces, or bliss;
Section~\ref{sec:experiments} benchmarks against those three engines
and no others.

\subsection{Weisfeiler--Leman invariants and hypergraph learning}
\label{sec:rel-wl}

A separate line of work computes isomorphism \emph{invariants} by
iterated colour refinement in the style of Weisfeiler and Leman
\citep{weisfeiler1968reduction}.  Such invariants are one-sided:
different values certify non-isomorphism, but equal values certify
nothing, and for every fixed refinement dimension $k$ there exist
non-isomorphic graph pairs the $k$-dimensional test cannot separate
\citep{cai1992optimal}.  On hypergraphs, colour refinement characterises
exactly the homomorphism counts of Berge-acyclic patterns
\citep{boker2019color}, which is strictly weaker than isomorphism.
\citet{feng2024hypergraph} introduced a hypergraph Weisfeiler--Leman
refinement together with the HIC tool, the only production-quality
native hypergraph fingerprint we are aware of; their own Figure~3
exhibits a pair of non-isomorphic hypergraphs the refinement collides,
and the authors do not characterise the failure family.
\citet{zhang2025expressivity} generalise the construction to a
$k$-dimensional hierarchy and prove it strict --- each level separates
pairs the previous level collapses --- at a cost that grows as
$O(h \cdot k \cdot n^{k+1})$, so completeness is unreachable at any
affordable level.  Kernel methods inherit the same ceiling:
\citet{bai2014hypergraph} build a hypergraph kernel from isomorphism
tests on the directed line graph, converting the hypergraph into a
graph before any comparison takes place.  These methods drive
successful hypergraph learning architectures, just as the
one-dimensional test bounds message-passing graph networks
\citep{xu2019powerful}, but they answer a different question than ours:
they trade completeness for tractability, whereas \IsalHG{} aims at a
complete invariant and accepts, for now, a large runtime cost.  We
therefore cite them as predecessors and do not benchmark against them
(Section~\ref{sec:setup} returns to this point).

\subsection{Sequential encodings of combinatorial structures}
\label{sec:rel-sequential}

Encoding structured objects as strings is an old and productive idea.
SMILES represents molecules as character strings and obtains canonical
forms through Morgan-style canonical atom ranking
\citep{weininger1988smiles}; SELFIES redesigned the alphabet so that
every string decodes to a valid molecule \citep{krenn2020selfies}, the
same closure property \IsalHG{} adopts as a design requirement.
Sequential graph generation models such as GraphRNN linearise graphs
into edge-event sequences for learning purposes, without canonicality
guarantees \citep{you2018graphrnn}.  Closest to our goal,
\citet{grzelak2021bigraph} define a canonical string encoding for pure
bigraphs, a different class of objects (process topologies with linked
interfaces) but the same logical aim of replacing structure comparison
by string comparison.

\IsalHG{} descends directly from two systems.  \IsalGraph{} represents
finite simple graphs as strings over a nine-character alphabet executed
by a virtual machine with a circular doubly-linked list and two
traversal pointers, and conjectures its exhaustive-backtracking
canonical string to be a complete graph invariant
\citep{lopezrubio2025isalgraph, isalgraph2026}.  \IsalSR{} transfers the
machine to the labelled expression DAGs of symbolic regression, adds a
two-tier labelled alphabet, and uses the canonical string to collapse
the $\Theta(k!)$ node-numbering redundancy of the search space
\citep{isalsr2026}.  \IsalHG{} generalises the machine along a third
axis: from edges over exactly two nodes with two pointers to hyperedges
over up to $k$ nodes with $k$ pointers.  We are not aware of any prior
work that represents hypergraphs as executable instruction sequences
serving as a canonical isomorphism invariant; the structurally closest
proposals are the bigraph encoding of \citet{grzelak2021bigraph} and the
two sibling systems above.

\section{Methodology\label{sec:methodology}}
This section defines the \IsalHG{} representation.  We fix notation and
scope (\S\ref{sec:preliminaries}), define the instruction alphabet and
the virtual machine that executes it (\S\ref{sec:instructions}), present
the string-to-hypergraph interpreter (\S\ref{sec:s2h}) and the greedy
hypergraph-to-string encoder (\S\ref{sec:h2s}), and construct the
canonical string together with the two properties we \replaced{prove and
then validate}{conjecture and
validate} empirically (\S\ref{sec:canonical}).

\subsection{Preliminaries and scope}
\label{sec:preliminaries}

A \emph{hypergraph} is a pair $H = (V, E)$ where $V$ is a finite set of
nodes and $E$ is a set of hyperedges, each hyperedge $e \in E$ being a
subset of $V$ with $|e| \geq 2$.  The \emph{arity} of a hyperedge is its
cardinality.  The \emph{primal graph} of $H$ is the simple graph on $V$
that joins two nodes whenever some hyperedge contains both; $H$ is
\emph{connected} when its primal graph is connected, and the distance
$d_H(u, v)$ between two nodes is their distance in the primal graph.
Two hypergraphs $H_1 = (V_1, E_1)$ and $H_2 = (V_2, E_2)$ are
\emph{isomorphic}, written $H_1 \iso H_2$, when a bijection
$\phi : V_1 \to V_2$ exists with $E_2 = \{ \phi(e) : e \in E_1 \}$.

Throughout the paper, input hypergraphs are finite, undirected,
connected, free of duplicate hyperedges, and of arity at most a fixed
parameter $k \geq 2$.  Directed hypergraphs, edge weights, and
disconnected inputs are outside the scope of this preprint; node and
hyperedge labels are supported by the alphabet but not exercised here
(Remark~\ref{rem:labels}).

\subsection{Instruction set and virtual machine}
\label{sec:instructions}

An \IsalHG{} string is executed by a virtual machine whose state is a
tuple
\[
  \mathcal{S} \;=\; (H, \cdll, \ptr{1}, \dots, \ptr{k}),
\]
where $H$ is the hypergraph under construction, $\cdll$ is a circular
doubly-linked list (CDLL) holding one reference per node of $H$, and
$\ptr{1}, \dots, \ptr{k}$ are $k$ traversal pointers into $\cdll$.  The
pointer count equals the arity bound $k$: an instruction that creates a
hyperedge designates each pre-existing member through one of the leading
pointers, so $k$ pointers suffice for hyperedges of arity up to $k$.
The initial state consists of a single node, a CDLL containing only that
node, and all $k$ pointers resting on it.

The instruction alphabet $\SigHG$ contains five token families,
summarised in Table~\ref{tab:instructions}.  Movement tokens $P_i$ and
$N_i$ advance and retreat pointer $\ptr{i}$ by one CDLL position.
Insertion tokens $V_{i,j}$ and $C_i$ create hyperedges: $V_{i,j}$
connects the $i$ nodes under $\ptr{1}, \dots, \ptr{i}$ to $j$ freshly
created nodes, which are spliced into $\cdll$ immediately after
$\ptr{1}$, while $C_i$ connects the $i$ nodes under
$\ptr{1}, \dots, \ptr{i}$ and creates no node.  Neither insertion token
moves a pointer.  $W$ does nothing.  Counting the admissible parameter
combinations gives
\[
  |\SigHG| \;=\; \underbrace{\tfrac{k(k-1)}{2}}_{V_{i,j}}
  \;+\; \underbrace{k}_{C_i}
  \;+\; \underbrace{k}_{P_i}
  \;+\; \underbrace{k}_{N_i}
  \;+\; \underbrace{1}_{W}
  \;=\; \tfrac{k(k-1)}{2} + 3k + 1 ,
\]
which is $13$ for the arity-3 hypergraphs of
Section~\ref{sec:experiments} and $76$ at the default cap $k = 10$.

\begin{table}[t]
  \centering
  \caption{The \IsalHG{} instruction set $\SigHG$.  The machine state is
  $(H, \cdll, \ptr{1}, \dots, \ptr{k})$; $\mathrm{val}(\ptr{i})$ denotes
  the node referenced by pointer $\ptr{i}$.  No instruction can fail:
  pointer aliasing shrinks the support of the created hyperedge, and a
  duplicate hyperedge turns $C_i$ into a no-op.}
  \label{tab:instructions}
  \begin{tabular}{@{}llp{8.6cm}@{}}
    \toprule
    Token & Constraints & Effect on the state \\
    \midrule
    $V_{i,j}$ & $1 \leq i, j \leq k{-}1$;\; $i{+}j \leq k$ &
      Create $j$ new nodes $u_1, \dots, u_j$; insert them into $\cdll$
      immediately after $\ptr{1}$; add the hyperedge
      $\{\mathrm{val}(\ptr{1}), \dots, \mathrm{val}(\ptr{i})\} \cup
      \{u_1, \dots, u_j\}$ to $H$.  Pointers do not move. \\
    $C_i$ & $1 \leq i \leq k$ &
      Add the hyperedge
      $\{\mathrm{val}(\ptr{1}), \dots, \mathrm{val}(\ptr{i})\}$ to $H$;
      skip if that hyperedge already exists.  Pointers do not move. \\
    $P_i$ & $1 \leq i \leq k$ &
      Advance $\ptr{i}$ one position forward in $\cdll$. \\
    $N_i$ & $1 \leq i \leq k$ &
      Retreat $\ptr{i}$ one position backward in $\cdll$. \\
    $W$   & --- & No operation. \\
    \bottomrule
  \end{tabular}
\end{table}

\begin{proposition}[Closure]
\label{prop:closure}
Every string $w \in \SigHG^{*}$ decodes to a valid hypergraph.
\end{proposition}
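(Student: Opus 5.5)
The proof is an induction on the length of $w$, carried by an invariant on the machine state $\mathcal{S} = (H, \cdll, \ptr{1}, \dots, \ptr{k})$. The invariant $\mathcal{I}$ has four parts:
\begin{itemize}[label={}, leftmargin=1em]
\item (i) $H=(V,E)$ is a hypergraph in the sense of \S\ref{sec:preliminaries}: every hyperedge is a subset of $V$ of cardinality at least $2$, with no duplicates.
\item (ii) $\cdll$ is a nonempty circular doubly-linked list whose entries are in bijection with $V$.
\item (iii) Each pointer $\ptr{i}$ references an entry of $\cdll$.
\item (iv) $H$ is connected.
\end{itemize}
Part (iv) is not strictly needed for validity, but it comes for free and matches the scope of the encoder.

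The base case is the empty string, which leaves the initial state. That state has a single node, $E=\emptyset$, a one-element CDLL, and all pointers on that element, so $\mathcal{I}$ holds. For the inductive step I will show that each token family maps a state satisfying $\mathcal{I}$ to a state satisfying $\mathcal{I}$, and that no token is ever undefined on such a state (there is no ``failure'' transition). Each family is handled in turn:
\begin{itemize}[label={}, leftmargin=1em]
\item $W$ changes nothing.
\item $P_i$ and $N_i$ only move $\ptr{i}$ along $\cdll$. Since $\cdll$ is nonempty and circular by (ii), the successor and predecessor of any entry exist (on a one-element list each is the entry itself). So (iii) is preserved, and $H$ is untouched.
\item $V_{i,j}$ adds $j\ge 1$ fresh nodes, splices them after $\ptr{1}$, and adds $e=\{\mathrm{val}(\ptr{1}),\dots,\mathrm{val}(\ptr{i})\}\cup\{u_1,\dots,u_j\}$. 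Splicing into a CDLL keeps it circular and in bijection with the enlarged $V$, and the old pointers still reference valid entries. The hyperedge $e$ contains $\mathrm{val}(\ptr{1})$ and at least one new node, so $|e|\ge 2$ even when the pointers alias. Because $e$ contains fresh nodes it cannot duplicate an existing hyperedge. It also attaches the new nodes to the old connected component through $\mathrm{val}(\ptr{1})$, so (iv) is preserved. Finally, $|e|\le i+j\le k$, so the arity bound holds.
\item $C_i$ adds the set $S=\{\mathrm{val}(\ptr{1}),\dots,\mathrm{val}(\ptr{i})\}$ unless $S$ is already present, in which case it is a no-op. $S$ has no new nodes, so connectivity is unaffected and $|S|\le i\le k$.
\end{itemize}

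The main obstacle is the degenerate behaviour of $C_i$ caused by pointer aliasing. When all of $\ptr{1},\dots,\ptr{i}$ sit on the same node, and in particular for $C_1$ or at the initial state, $S$ is a singleton. Adding it would violate the requirement $|e|\ge 2$ from \S\ref{sec:preliminaries}. Table~\ref{tab:instructions} says only that aliasing ``shrinks the support''. I would therefore make the semantics explicit: $C_i$ acts as a no-op whenever $|S|<2$, in the same way as for duplicates. This is the reading that ``no instruction can fail'' forces, and it is precisely the point at which closure could otherwise break.

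With that convention every case preserves $\mathcal{I}$, and the induction gives closure for every $w\in\SigHG^{*}$. In fact it gives slightly more: every string decodes to a connected hypergraph of arity at most $k$.
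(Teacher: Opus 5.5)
Your argument is the paper's own: a token-by-token check that every instruction is defined on every reachable state and leaves a valid hypergraph, made explicit as an induction on an invariant. The extra clauses you carry (connectivity, arity at most $k$) are correct, and the paper does not claim them.

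The substantive difference is the aliasing case, and there you are right where the paper's proof is not. The paper disposes of aliasing with ``the created hyperedge simply has smaller support''. It never observes that the support can collapse to a single node. This happens for every $C_1$, and for any $C_i$ while $\ptr{1},\dots,\ptr{i}$ share a node, for instance in the initial state. The definition of \S\ref{sec:preliminaries} requires $|e|\geq 2$, so under it the one-token string $w = C_1$ already decodes to a non-hypergraph.

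Present your skip rule as an amendment of Table~\ref{tab:instructions} and Algorithm~\ref{alg:s2h}, not as something the text forces. The other consistent repair is to keep the literal semantics and admit singleton hyperedges in decoded outputs, which is what the paper's proof tacitly does.

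Either repair costs nothing downstream. The encoder emits $C_i$ only for a waiting $e$ with $U(e)=\varnothing$. Then $i=|e|\geq 2$, and the pointers rest on the $i$ distinct nodes of $\mu(e)$ because $\mu$ is injective. So Theorems~\ref{conj:roundtrip} and~\ref{conj:complete} never meet the degenerate case.
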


\begin{proof}
Every token is executable in every reachable state.  $\cdll$ is circular
and never empty, so $P_i$ and $N_i$ are always defined.  The pointers
always reference nodes of $H$, so the member sets built by $V_{i,j}$ and
$C_i$ are well defined; if several pointers alias the same node the
created hyperedge simply has smaller support, and $V_{i,j}$ always
contains at least one fresh node, so it never duplicates an existing
hyperedge.  $C_i$ skips duplicates by definition, and $W$ changes
nothing.  Hence execution is total and the final state contains a valid
hypergraph.
\end{proof}

Four decisions shape this alphabet.  First, hyperedge insertion is
atomic: one $V_{i,j}$ or $C_i$ token per hyperedge, so the string length
decomposes into a structural part fixed by $H$ and a traversal part
(Remark~\ref{rem:length}).  Second, the language is closed
(Proposition~\ref{prop:closure}): there are no syntactically or
semantically invalid strings, the property that makes molecular
alphabets such as SELFIES robust for generative use
\citep{krenn2020selfies} and that \IsalGraph{} adopted for graphs
\citep{isalgraph2026}.  Third, movements are unit steps on a circular
list, so the traversal cost of inserting a hyperedge equals the CDLL
distance between the pointers and the intended members; encoders that
keep this distance small produce short strings; cascade rule~C1 of
\S\ref{sec:h2s} minimises exactly this cost.  Fourth, new nodes enter
$\cdll$ next to $\ptr{1}$, keeping recently created nodes close to the
active region of the list and therefore cheap to reach while the
encoding of their neighbourhood completes.  The no-op $W$ is retained
for padding and for closure of the language under single-token edits,
following \IsalGraph{}.

\begin{remark}[Labelled extension]
\label{rem:labels}
$\SigHG$ extends to node- and hyperedge-labelled hypergraphs by
parameterising the insertion tokens with label identifiers,
$V[\ell_e; i; j; \ell_1, \dots, \ell_j]$ and $C[\ell_e; i]$, in the
two-tier style introduced by \IsalSR{} \citep{isalsr2026}.  The figures
in this paper render tokens in that serialised syntax with the single
trivial label $0$; the mathematical treatment of labelled hypergraphs is
deferred.
\end{remark}

\subsection{String-to-hypergraph conversion}
\label{sec:s2h}

The interpreter $\STH$ executes the tokens of a string sequentially from
the initial state and returns the final hypergraph.
Algorithm~\ref{alg:s2h} lists the dispatch;
Figure~\ref{fig:s2h-steps} traces it on the canonical string of the
Fano plane.  By Proposition~\ref{prop:closure} the interpreter is total:
it never rejects its input, which distinguishes $\SigHG$ from encodings
whose decoder must validate.

\begin{algorithm}[t]
\caption{$\STH(w, k)$: string-to-hypergraph interpreter.}
\label{alg:s2h}
\begin{algorithmic}[1]
\Require string $w \in \SigHG^{*}$, pointer count $k$
\Ensure hypergraph $H$
\State $H \gets$ single node $u_0$;\; $\cdll \gets [u_0]$;\;
       $\ptr{1}, \dots, \ptr{k} \gets u_0$
\For{each token $t$ of $w$, in order}
  \If{$t = P_i$}\; advance $\ptr{i}$ one position forward in $\cdll$
  \ElsIf{$t = N_i$}\; retreat $\ptr{i}$ one position backward in $\cdll$
  \ElsIf{$t = V_{i,j}$}
     \State create nodes $u_1, \dots, u_j$; insert them into $\cdll$ after $\ptr{1}$
     \State add hyperedge
       $\{\mathrm{val}(\ptr{1}), \dots, \mathrm{val}(\ptr{i})\}
        \cup \{u_1, \dots, u_j\}$ to $H$
  \ElsIf{$t = C_i$}
     \State $e \gets \{\mathrm{val}(\ptr{1}), \dots, \mathrm{val}(\ptr{i})\}$
     \If{$e \notin E(H)$}\; add hyperedge $e$ to $H$ \EndIf
  \ElsIf{$t = W$}\; do nothing
  \EndIf
\EndFor
\State \Return $H$
\end{algorithmic}
\end{algorithm}

\begin{example}
\label{exa:decode}
Let $k = 3$ and $w = V_{1,2}\,P_2\,C_2$.  Execution starts with the
single node $0$ and all pointers on it.  $V_{1,2}$ creates nodes $1$ and
$2$, splices them after $\ptr{1}$ so that $\cdll = [0, 1, 2]$, and adds
the arity-3 hyperedge $\{0, 1, 2\}$.  $P_2$ advances $\ptr{2}$ to node
$1$.  $C_2$ adds the arity-2 hyperedge
$\{\mathrm{val}(\ptr{1}), \mathrm{val}(\ptr{2})\} = \{0, 1\}$.  The
result is a hypergraph on three nodes with one triple and one pair.
\end{example}

\begin{figure}[t]
  \centering
  \includegraphics[width=\textwidth]{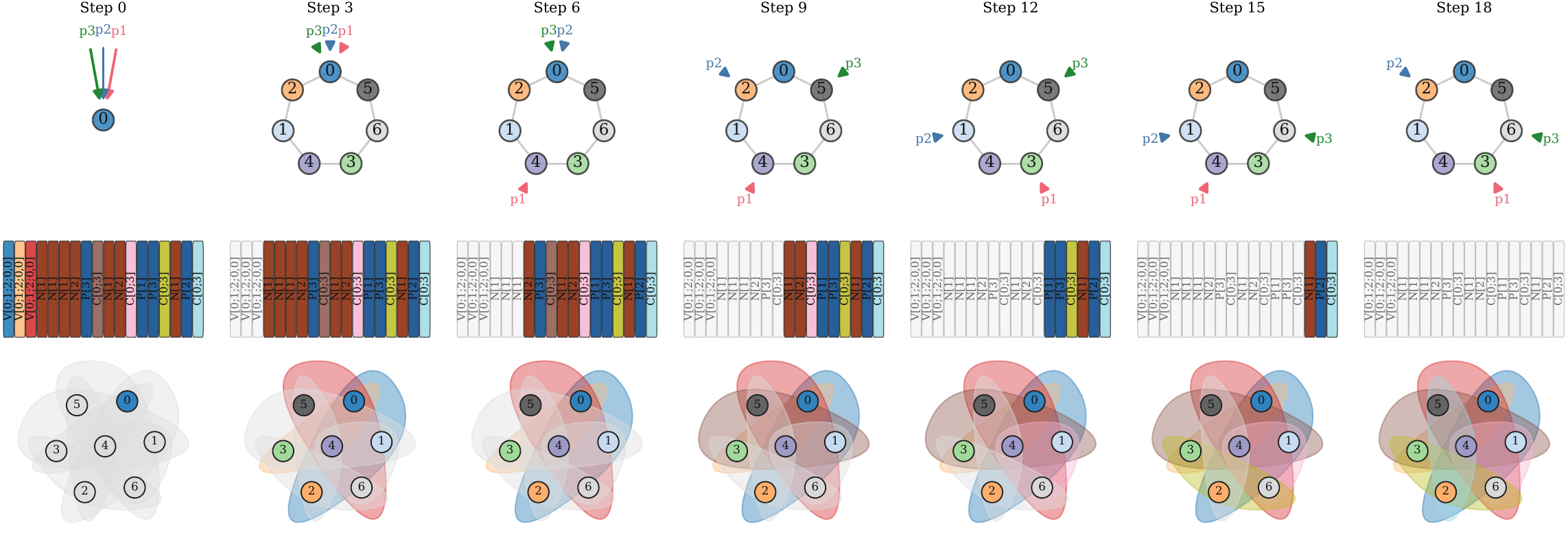}
  \caption{$\STH$ execution on the canonical string of the Fano plane
  STS(7) ($18$ tokens, $k = 3$), sampled every three instructions.  Top:
  the CDLL with the pointers $\ptr{1}$, $\ptr{2}$, $\ptr{3}$.  Middle:
  the token tape; pending instructions are highlighted and fade once
  executed.  Bottom: the output hypergraph, whose seven hyperedges
  materialise as the $V$/$C$ tokens execute.  Tokens are rendered in the
  serialised labelled syntax $V[\ell_e; i; j; \ell_1, \dots, \ell_j]$
  of Remark~\ref{rem:labels}, with all labels equal to the trivial
  $0$.}
  \label{fig:s2h-steps}
\end{figure}

\subsection{Hypergraph-to-string conversion}
\label{sec:h2s}

The greedy encoder $\HTS$ inverts the interpreter.  Given a connected
hypergraph $H$ and a start node $v_0$, it simulates the virtual machine
while maintaining a correspondence between the machine's nodes and the
nodes of $H$: the initial machine node corresponds to $v_0$, and every
hyperedge of $H$ is marked \emph{waiting} until a token inserts its
image.  At each step, a waiting hyperedge is \emph{insertable} when at
least one of its members already has a machine counterpart; its
materialised members must be brought under the leading pointers, and its
remaining members will be created fresh by the $V$ token itself.

The cost of a candidate is measured in pointer movements.  A
\emph{displacement tuple} $\delta = (\delta_1, \dots, \delta_k) \in
\mathbb{Z}^k$ moves each pointer $\ptr{l}$ by $|\delta_l|$ CDLL
positions, forward when $\delta_l > 0$ and backward when
$\delta_l < 0$, at total cost $\sum_l |\delta_l|$.  For each insertable
hyperedge the encoder computes the cost-minimal displacement tuple that
parks $\ptr{1}, \dots, \ptr{i}$ on its $i$ materialised members ---
candidate-driven search over the waiting hyperedges, which are far fewer
than the $(2|\cdll|+1)^k$ displacement tuples.  The winning candidate is
selected by a fixed cascade:

\begin{enumerate}[label=C\arabic*., leftmargin=2.2em, nosep]
  \item \emph{Cost.}  Minimise the total displacement
        $\sum_l |\delta_l|$; among equal totals, take the
        lexicographically smallest
        $\bigl(|\delta_1|, \dots, |\delta_k|,
        \delta_1, \dots, \delta_k\bigr)$.
  \item \emph{Kind.}  Prefer $V$ over $C$.
  \item \emph{Token.}  Take the lexicographically smallest $(i, j)$
        among $V$ candidates, or the smallest $i$ among $C$ candidates.
  \item \emph{Structure.}  Take the candidate hyperedge with the
        lexicographically smallest structural tuple $\eta(e)$
        (\S\ref{sec:canonical}).
  \item \emph{Backtracking.}  Any tie that survives C1--C4 spawns one
        branch per remaining candidate.  In addition, when a $V_{i,j}$
        token is emitted, the assignment of the $j$ new members of $e$
        to the $j$ insertion positions after $\ptr{1}$ is not determined
        by C1--C4; the encoder branches over these assignments as well.
        All branches are explored to completion and the
        lexicographically smallest completed string is kept.
\end{enumerate}

\noindent
The selected displacement is emitted as a block of movement tokens ---
retreats before advances, pointer index ascending, so that equal
displacements always serialise identically --- followed by the $V_{i,j}$
or $C_i$ token.  Algorithm~\ref{alg:h2s} summarises the loop, and
Figure~\ref{fig:h2s-steps} traces it on the Fano plane.  Connectivity
guarantees progress: while waiting hyperedges remain, at least one of
them touches the materialised region, so the loop terminates after
exactly $|E(H)|$ insertion tokens.

\begin{algorithm}[t]
\caption{Greedy $\HTS(H, v_0)$: hypergraph-to-string encoder.}
\label{alg:h2s}
\begin{algorithmic}[1]
\Require connected hypergraph $H$ of arity at most $k$; start node $v_0$
\Ensure string $w$ with $\STH(w, k) \iso H$
       (\replaced{Theorem}{Conjecture}~\ref{conj:roundtrip})
\State initialise the machine on a single node corresponding to $v_0$;
       mark every hyperedge of $H$ waiting
\While{waiting hyperedges remain}
  \For{each insertable waiting hyperedge $e$}
    \State compute the cost-minimal displacement $\delta(e)$ parking
           $\ptr{1}, \dots, \ptr{i}$ on the materialised members of $e$
  \EndFor
  \State select $(e, \delta(e))$ by the cascade C1--C5
  \State emit the movement block of $\delta(e)$; move the pointers
  \State emit $V_{i,j}$ if $e$ has $j \geq 1$ unmaterialised members,
         else $C_i$; execute it; extend the node correspondence; unmark
         $e$
\EndWhile
\State \Return the emitted token sequence
\end{algorithmic}
\end{algorithm}

\begin{figure}[t]
  \centering
  \includegraphics[width=\textwidth]{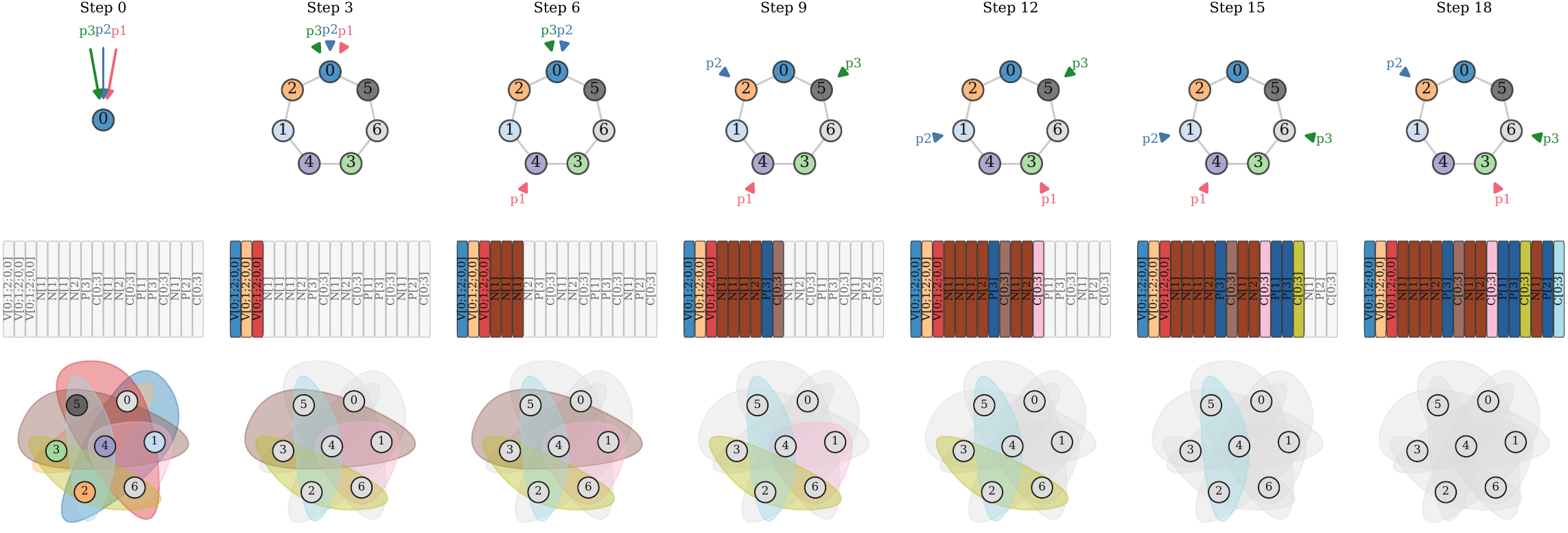}
  \caption{Greedy $\HTS$ encoding of the Fano plane STS(7), sampled
  every three instructions.  Top: the CDLL of the simulated machine with
  the pointers $\ptr{1}$, $\ptr{2}$, $\ptr{3}$.  Middle: the output
  tape, filling left to right as tokens are emitted.  Bottom: the input
  hypergraph; each of the seven hyperedges fades once an emitted
  $V$/$C$ token consumes it.  The final string has $18$ tokens: $7$
  insertion tokens ($3$ $V$, $4$ $C$) and $11$ movements.}
  \label{fig:h2s-steps}
\end{figure}

\begin{remark}[String-length decomposition]
\label{rem:length}
The greedy encoder emits exactly one insertion token per hyperedge and
no $W$, so for a hypergraph with $m$ hyperedges
\[
  |w| \;=\; m \;+\; \sum_{t} \operatorname{cost}(\delta_t),
\]
where the sum runs over the emission steps.  Only the traversal term
depends on the encoding order; minimising string length therefore
reduces to minimising total pointer travel, which is what cascade rule
C1 pursues locally.  The node count is recovered as
$|V| = 1 + \sum_{V\text{-tokens}} j$.  For the Fano plane,
$|w| = 7 + 11 = 18$.
\end{remark}

\subsection{Structural tuples and the canonical string}
\label{sec:canonical}

The greedy encoder is deterministic once the start node is fixed and the
C5 branches are resolved, but its output still depends on $v_0$.  The
canonical algorithm removes this dependence with a structural seed
selector.  For a node $v$ and depth $h \geq 1$, let
\[
  \xi_h(v) \;=\; \bigl|\{\, u \in V : d_H(u, v) = h \,\}\bigr|,
  \qquad
  \xi(v) = \bigl(\xi_1(v), \xi_2(v), \xi_3(v)\bigr),
\]
the number of nodes at primal distance exactly $h$ from $v$, collected
up to depth $3$; for a hyperedge $e$, let
$\eta(e) = \sum_{v \in e} \xi(v)$ componentwise.  Both tuples are
isomorphism-invariant because $\phi$ preserves primal distances.  The
depth $3$ is inherited from \IsalGraph{}; whether it must grow for
specific hypergraph families is an open question.

\begin{definition}[Canonical string]
\label{def:canonical}
Tokens are ordered by their kind rank $W < N < P < V < C$ and then by
their numeric parameters, and token sequences are compared
lexicographically under this order.  The \emph{canonical string} of a
connected hypergraph $H$ is
\[
  \wstar_H \;=\; \operatorname{lexmin}
  \bigl\{\, \HTS(H, v_0) \;:\;
     v_0 \in \arg\max\nolimits_{\mathrm{lex}} \xi(v) \,\bigr\},
\]
where each $\HTS$ run resolves its C5 ties by exploring all branches
and returning its lexicographically smallest completed string.
\end{definition}

Restricting the start nodes to the $\arg\max$ of $\xi$ is sound because
an isomorphism maps maximal-$\xi$ nodes to maximal-$\xi$ nodes, and it
narrows the seed set on irregular inputs; on
vertex-transitive hypergraphs, where every node attains the maximum, the
canonical algorithm degenerates to running the greedy from all $n$
nodes.

\begin{example}
\label{exa:fano}
For the Fano plane STS(7) with $k = 3$ the canonical algorithm returns
the $18$-token string
\[
  \wstar = \mathtt{V_{1,2}\,V_{1,2}\,V_{1,2}\,
  N_1\,N_1\,N_1\,N_2\,P_3\,C_3\,
  N_1\,N_2\,C_3\,
  P_1\,P_3\,C_3\,
  N_1\,P_2\,C_3} .
\]
The three $V_{1,2}$ tokens materialise the six non-seed nodes and the
three lines through the seed node; the four $C_3$ tokens complete the
remaining lines after repositioning the pointers.
Figures~\ref{fig:h2s-steps} and~\ref{fig:s2h-steps} trace the encoding
and the decoding of this string.
\end{example}

\replaced{We now state and prove the two properties on which the
isomorphism application rests.  Section~\ref{sec:results} reports the
matching empirical checks.}{We now state the two properties on which the isomorphism application
rests.  Both are conjectures: we argue them informally, validate them
empirically in Section~\ref{sec:results}, and defer the formal proofs to
a dedicated theoretical paper, exactly as \IsalGraph{} and \IsalSR{} did
for their canonical claims \citep{isalgraph2026, isalsr2026}.}

\begin{addedblock}
Both proofs were already complete when this preprint was first posted,
and the text of that version deferred them to a companion paper.  We
print them here so that the readers who have since sent us proofs of
these two statements can check their arguments against ours.

\paragraph{Encoder state.}
Fix a start node $v_0$.  A run of $\HTS$ holds a state
$\sigma = (\mu, C, \cdll, \ptr{1}, \dots, \ptr{k})$: a partial injection
$\mu$ from $V(H)$ to the machine's nodes with $\mu(v_0)$ the initial
node, the set $C \subseteq E(H)$ of consumed hyperedges, and the machine
CDLL and pointers.  For $e \in E(H)$ put
$M(e) = e \cap \operatorname{dom} \mu$ and
$U(e) = e \setminus \operatorname{dom} \mu$.  A waiting hyperedge $e$ is
\emph{insertable at $\delta$} when the displaced $\ptr{1}, \dots,
\ptr{i}$ rest on $\mu(M(e))$, where $i = |M(e)|$; it yields a $V_{i,j}$
candidate with $j = |U(e)| \geq 1$, or a $C_i$ candidate when
$U(e) = \varnothing$.  The state is \emph{terminal} when $C = E(H)$.

No step of the encoder reads a node or a hyperedge identifier: C1 reads
displacement costs, C2 the token kind, C3 the pair $(i, j)$, C4 the
structural tuple $\eta$, and C5 branches over \emph{every} candidate
that survives C1--C4.  This is the property both proofs below use.

\begin{lemma}[Progress]
\label{lem:progress}
Every reachable non-terminal state has an insertable hyperedge, and
every run halts after exactly $m = |E(H)|$ insertion tokens with
$\operatorname{dom} \mu = V(H)$ and $C = E(H)$.
\end{lemma}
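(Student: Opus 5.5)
The proof proceeds by establishing an invariant of reachable states and then counting. The invariant has four parts: (i) $\operatorname{dom}\mu$ is exactly $\{v_0\} \cup \bigcup_{e \in C} e$; (ii) the CDLL contains precisely the nodes $\mu(\operatorname{dom}\mu)$, each once; (iii) every pointer rests on a CDLL node; (iv) $|C|$ equals the number of insertion tokens emitted so far. The invariant holds initially, with $\mu = \{v_0 \mapsto u_0\}$, $C = \varnothing$, and all pointers on $u_0$. I check preservation token by token. Movement tokens change only pointers, and since the CDLL is circular and nonempty, the pointers stay on CDLL nodes. A $V_{i,j}$ step on a candidate $e$ creates $j = |U(e)|$ fresh machine nodes, extends $\mu$ bijectively from $U(e)$ onto them, splices them into the CDLL, and adds $e$ to $C$, so (i)--(iv) persist. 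A $C_i$ step has $U(e) = \varnothing$, adds $e$ to $C$, and leaves $\mu$ and the CDLL untouched. Injectivity of $\mu$ is preserved because the new machine nodes are fresh.

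For the first claim of Lemma~\ref{lem:progress}, take a non-terminal state, so some $e \in E(H) \setminus C$ exists. I need a waiting hyperedge $e'$ with $M(e') \neq \varnothing$. If $\operatorname{dom}\mu = V(H)$, then $e$ itself qualifies. Otherwise, since $v_0 \in \operatorname{dom}\mu$ and the primal graph is connected, there is a primal edge $\{x,y\}$ with $x \in \operatorname{dom}\mu$ and $y \notin \operatorname{dom}\mu$. Some hyperedge $e' \ni x,y$ must then exist, and by (i) it cannot lie in $C$, since otherwise $y \in \operatorname{dom}\mu$. Hence $e'$ is waiting with $x \in M(e')$.

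It remains to show that $e'$ is insertable at some displacement. Here I need $i = |M(e')| \le k$, which holds because the arity is at most $k$. When $U(e') \neq \varnothing$, I also need $i \le k-1$ and $1 \le j \le k-1$ with $i + j \le k$ so that $V_{i,j}$ is a legal token; this follows from $i + j = |e'| \le k$ and $i \ge 1$. Since $\mu(M(e'))$ consists of CDLL nodes by (ii), the displacement that moves $\ptr{l}$ to the $l$-th of them for $l \le i$ exists. So the candidate set is nonempty and the cascade C1--C5 selects at least one branch.

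Termination follows because every step adds exactly one hyperedge to $C$, namely a waiting one, and $C \subseteq E(H)$ never repeats. After exactly $m$ insertion tokens the state is terminal, and by (iv) no earlier state was terminal. At that point $C = E(H)$, so by (i) $\operatorname{dom}\mu = \{v_0\} \cup \bigcup E(H)$. This equals $V(H)$ because $H$ is connected with every node lying in some hyperedge; the degenerate case $n = 1$, $m = 0$ is trivial. The statement covers every branch, since the argument never uses which candidate C5 picks.

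The main obstacle I expect is verifying that the emitted token is well-formed and faithful. Concretely, pointer aliasing must never occur for an insertable hyperedge, so that the machine adds exactly $\mu(e)$. This holds because the $i$ parked pointers sit on distinct nodes of the injective image $\mu(M(e))$, and a $V$ token's fresh nodes make $e$ new. For a $C_i$ token the skip rule never fires, because $e \notin C$ and injectivity of $\mu$ ensures no other consumed edge has the same image.
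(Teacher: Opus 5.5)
Your proof is correct and follows essentially the same route as the paper's: connectivity yields a waiting hyperedge that meets both $\operatorname{dom}\mu$ and its complement, and when everything is already mapped any waiting hyperedge is a $C_{|e|}$ candidate; the bound $i+j=|e|\le k$ with $i,j\ge 1$ makes $V_{i,j}$ legal, circularity of the CDLL makes the parking displacement available, and counting consumed hyperedges gives termination after exactly $m$ tokens with every node mapped. Your invariant (i) only makes explicit what the paper uses implicitly, namely that a consumed hyperedge has all its members mapped. Your closing paragraph on aliasing and the $C_i$ skip rule is not needed for this lemma; the paper places that argument in the proof of Theorem~\ref{conj:roundtrip}.
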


\begin{proof}
Let $\sigma$ be reachable and non-terminal, and $r = |\operatorname{dom}
\mu| \geq 1$.  The CDLL holds $r$ slots, so any pointer reaches any slot
in at most $r$ moves and every displacement is available.

If some waiting $e$ has $U(e) = \varnothing$, then $|e| \leq k$ and the
displacement parking $\ptr{1}, \dots, \ptr{|e|}$ on the $|e|$ slots of
$\mu(e)$ makes $e$ insertable as a $C_{|e|}$ candidate.  Otherwise every
waiting hyperedge has an unmapped member, so
$\operatorname{dom} \mu \neq V(H)$.  By connectivity pick adjacent
$x \in \operatorname{dom}\mu$ and $y \notin \operatorname{dom}\mu$, and a
hyperedge $e$ containing both.  A consumed hyperedge has all its members
mapped, so $e$ is waiting; and $i = |M(e)| \geq 1$ with
$j = |U(e)| \geq 1$ and $i + j = |e| \leq k$ gives $i \leq k - 1$.
Parking $\ptr{1}, \dots, \ptr{i}$ on $\mu(M(e))$ makes $e$ insertable as
a $V_{i,j}$ candidate.

Each step consumes one hyperedge, so exactly $m$ insertion tokens are
emitted.  At $C = E(H)$ no node is unmapped, since an unmapped node
would lie in a hyperedge that is still waiting.
\end{proof}
\end{addedblock}

\begin{theorem}[Round-trip fidelity]
\label{conj:roundtrip}
For every connected hypergraph $H$ of arity at most $k$ and every start
node $v_0 \in V(H)$,
\(
  \STH\bigl(\HTS(H, v_0), k\bigr) \iso H .
\)
\end{theorem}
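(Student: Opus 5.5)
The plan is to prove a simulation invariant: the encoder's internal machine and the interpreter's machine stay in lockstep, and $\mu$ is always a partial isomorphism onto the decoded hypergraph. Fix a run of $\HTS(H,v_0)$, including any C5 branch choices, and let $w = t_1 t_2 \cdots t_L$ be its output. For each prefix $t_1\cdots t_s$, compare the encoder state $\sigma_s = (\mu, C, \cdll, \ptr{1},\dots,\ptr{k})$ with the state $\mathcal{S}_s = (H_s, \cdll', \ptr{1}',\dots,\ptr{k}')$ reached by Algorithm~\ref{alg:s2h} on that prefix. The invariant $\mathcal{I}_s$ asserts three things:
\begin{enumerate}[label=(\roman*), nosep]
  \item the two CDLLs are identical and the pointers coincide, $\ptr{l} = \ptr{l}'$ for every $l$;
  \item $V(H_s) = \mu(\operatorname{dom}\mu)$;
  \item $E(H_s) = \{\mu(e) : e \in C\}$.
\end{enumerate}
Since $\mu$ is injective, (ii) and (iii) say that $\mu$ restricted to $\operatorname{dom}\mu$ is an isomorphism from the partial hypergraph $(\operatorname{dom}\mu, C)$ onto $H_s$.

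The base case holds by construction. Both machines start on one node, with $\mu(v_0)$ equal to the interpreter's $u_0$, $C=\varnothing$, and all pointers on that node. The induction step has three cases.

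\emph{Movement tokens.} The encoder "moves the pointers" exactly as $P_i$ and $N_i$ prescribe, and neither machine changes its graph part, so $\mathcal{I}$ is preserved.

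\emph{$C_i$ tokens.} Before emitting, the encoder has parked $\ptr{1},\dots,\ptr{i}$ on $\mu(M(e))$ with $U(e)=\varnothing$ and $i=|e|$. By (i) the interpreter's pointers rest on the same nodes, so it forms $\{\mathrm{val}(\ptr{1}),\dots,\mathrm{val}(\ptr{i})\} = \mu(e)$. This set has full size $|e|$ because the encoder's displacement parks distinct pointers on the distinct slots of $\mu(M(e))$. The interpreter's duplicate check does not fire: $e$ is waiting, $H$ has no duplicate hyperedges, and $\mu$ is injective, so $\mu(e)\notin\{\mu(f): f\in C\} = E(H_s)$. Adding $e$ to $C$ therefore preserves (iii).

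\emph{$V_{i,j}$ tokens.} The encoder executes the token on its own simulated machine and then extends $\mu$ by sending $U(e)$ bijectively onto the $j$ fresh nodes, in the order fixed by the C5 branch. The interpreter creates $j$ fresh nodes and splices them after $\ptr{1}$ in the same positions. Identifying these fresh nodes preserves (i) and (ii). The added hyperedge is $\mu(M(e))\cup\mu(U(e)) = \mu(e)$, which preserves (iii).

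To finish, apply Lemma~\ref{lem:progress} at the end of the run: $\operatorname{dom}\mu = V(H)$ and $C=E(H)$. Then (ii) and (iii) make $\mu$ a bijection $V(H)\to V(H_L)$ with $E(H_L)=\{\mu(e): e\in E(H)\}$, which is exactly $\STH(\HTS(H,v_0),k)\iso H$. This argument holds for every branch, so it covers in particular the lexicographically smallest string that the C5 rule keeps.

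The step I expect to need the most care is the exact correspondence in the $V_{i,j}$ case together with the no-aliasing claim. I must check that the encoder's simulated machine is literally the interpreter's machine: the same splice rule, with new nodes placed immediately after $\ptr{1}$ in a fixed order, and the same convention for serialising the movement block, retreats before advances with pointer index ascending. A block of unit steps then realises the displacement tuple $\delta$ identically on both sides. I must also check that the cost-minimal $\delta$ parks $\ptr{1},\dots,\ptr{i}$ on pairwise distinct slots. This follows from the definition of insertability at $\delta$, which requires the displaced pointers to rest on the $i=|M(e)|$ distinct nodes of $\mu(M(e))$. Without it, aliasing would shrink the created hyperedge and break (iii).
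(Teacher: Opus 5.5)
Your proposal is correct and is essentially the paper's proof: the same lockstep invariant (identical CDLL and pointers, node set $\mu(\operatorname{dom}\mu)$, hyperedge set $\{\mu(e) : e \in C\}$), the same case analysis in which injectivity of $\mu$ rules out the $C_i$ no-op, and Lemma~\ref{lem:progress} to close the argument. The differences are cosmetic: you induct per token rather than per emission step, and you spell out the pointer-aliasing check that the paper leaves inside ``by the choice of displacement''.
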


\begin{addedblock}
\begin{proof}
Fix a run and let $w_t$ be the string emitted after $t$ steps and
$\sigma_t = (\mu_t, C_t, \cdll_t, \ptr{})$ the state reached.  We claim
that the machine state of $\STH(w_t, k)$ has node set
$\mu_t(\operatorname{dom} \mu_t)$, hyperedge set
$\{\mu_t(e) : e \in C_t\}$ with $|C_t|$ distinct members, and CDLL and
pointers equal to $\cdll_t$ and $\ptr{}$.

At $t = 0$ both sides carry the single node $\mu_0(v_0)$, no hyperedge,
and all pointers on that node.  Assume the claim at $t$.  The emitted
movement block moves the machine pointers exactly as the encoder moves
its own, so the CDLL and the pointers agree after it.  If the step emits
$V_{i,j}$ for $e$, the token creates $j$ fresh nodes and splices them
after $\ptr{1}$, and the encoder extends $\mu$ by mapping the $j$
members of $U(e)$ to those nodes in the branched order; the created
hyperedge is
$\{\mathrm{val}(\ptr{1}), \dots, \mathrm{val}(\ptr{i})\} \cup
\{\text{new nodes}\} = \mu_{t+1}(M(e)) \cup \mu_{t+1}(U(e)) =
\mu_{t+1}(e)$.  If the step emits $C_i$ for $e$, the pointed nodes are
$\mu_t(e)$ by the choice of displacement, and the token creates
$\mu_t(e)$.  Either way $C_{t+1} = C_t \cup \{e\}$ and the claim holds
at $t + 1$.  The images stay distinct because $\mu_t$ is injective, so
distinct hyperedges of $H$ have distinct images and the no-op branch of
$C_i$ never fires.

By Lemma~\ref{lem:progress} the run halts with
$\operatorname{dom} \mu = V(H)$ and $C = E(H)$.  Then $\mu$ is a
bijection from $V(H)$ onto the node set of $\STH(w, k)$ carrying $E(H)$
onto its hyperedge set, so $\mu^{-1}$ is an isomorphism.
\end{proof}
\end{addedblock}

\deleted{The encoder maintains, by construction, a bijection between the
machine's nodes and the visited nodes of $H$, and every emitted
insertion token adds the image of exactly one waiting hyperedge under
that bijection.  If the loop invariant --- the machine state always
equals the state that $\STH$ reaches on the partial string --- holds
through every branch, the final correspondence is an isomorphism.
Establishing the invariant rigorously is the missing step.}

\begin{addedblock}
Completeness needs one more ingredient: the search must see the input
only through its incidence structure.  We make this precise by
transporting whole encoder states along an isomorphism.

\begin{definition}[Corresponding states]
\label{def:correspond}
Let $\phi : H_1 \to H_2$ be an isomorphism.  States $\sigma_1$ on $H_1$
and $\sigma_2$ on $H_2$ \emph{correspond under $\phi$} when
$\mu_2 = \mu_1 \circ \phi^{-1}$, $C_2 = \phi(C_1)$, and the two machine
states are equal.
\end{definition}

\begin{lemma}[Seed equivariance]
\label{lem:seed}
An isomorphism $\phi : H_1 \to H_2$ maps
$\arg\max_{\mathrm{lex}} \xi$ of $H_1$ onto that of $H_2$.
\end{lemma}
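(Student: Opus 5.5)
The plan is to show that $\xi$ is an isomorphism invariant node by node, namely $\xi^{H_2}(\phi(v)) = \xi^{H_1}(v)$ for every $v \in V(H_1)$. Since $\phi$ is a bijection, the statement then follows by a short order-theoretic argument. The invariance of $\xi$ reduces to the fact that $\phi$ preserves primal distances, which the paper asserts when introducing $\xi$ and $\eta$. I would derive that fact explicitly from the definition of isomorphism.

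First, I show that $\phi$ is an isomorphism of primal graphs. Two nodes $u \neq v$ are adjacent in the primal graph of $H_1$ exactly when some $e \in E_1$ contains both. Because $E_2 = \{\phi(e) : e \in E_1\}$, this holds exactly when $\phi(e) \in E_2$ contains $\phi(u)$ and $\phi(v)$. Conversely, every $e' \in E_2$ has the form $\phi(e)$, so adjacency is preserved in both directions. A graph isomorphism maps walks to walks of the same length, and $\phi^{-1}$ does the same in reverse, so $d_{H_2}(\phi(u), \phi(v)) = d_{H_1}(u, v)$ for all $u, v$.

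Second, I fix $v$ and $h \in \{1,2,3\}$. The bijection $\phi$ restricts to a bijection from $\{u : d_{H_1}(u,v) = h\}$ onto $\{u' : d_{H_2}(u',\phi(v)) = h\}$, using surjectivity of $\phi$ to write every $u'$ as $\phi(u)$. Hence $\xi_h$ is preserved, and so is the tuple $\xi$.

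Third, I let $t^\star$ be the lexicographic maximum of $\xi$ over $V(H_1)$. Since $\phi$ is a bijection and $\xi^{H_2} \circ \phi = \xi^{H_1}$, the multisets of $\xi$-values on the two node sets coincide. The maximum over $V(H_2)$ is therefore also $t^\star$, and the argmax sets are $\{v : \xi^{H_1}(v) = t^\star\}$ and its image under $\phi$. So $\phi$ maps the first argmax set onto the second; injectivity is inherited, giving a bijection between them.

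No step is a real obstacle. The only point requiring care is that distance preservation needs \emph{both} directions of edge correspondence, including surjectivity onto $E_2$, and not merely that $\phi$ maps hyperedges to hyperedges. Otherwise distances could shrink in $H_2$.
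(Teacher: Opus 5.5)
Your proposal is correct and follows the paper's proof. Both arguments run: $\phi$ is a primal-graph isomorphism, so it preserves primal distances, so $\xi(\phi(v)) = \xi(v)$, so the tuple sets coincide and maximisers go to maximisers. Your version only spells out the adjacency correspondence and the bijection onto the argmax set, which the paper leaves implicit.
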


\begin{proof}
$\phi$ is an isomorphism of the primal graphs, hence preserves primal
distances, hence $\xi(\phi(v)) = \xi(v)$ for every node $v$.  The two
sets of tuples therefore agree, and $\phi$ carries maximisers to
maximisers.
\end{proof}

\begin{lemma}[Step equivariance]
\label{lem:step}
Let $\sigma_1$ and $\sigma_2$ correspond under $\phi$.  For every
displacement $\delta$, a waiting $e$ is insertable at $\delta$ in
$\sigma_1$ if and only if $\phi(e)$ is insertable at $\delta$ in
$\sigma_2$, and the two candidates agree on cost, kind, $(i, j)$ and
$\eta$.  Hence C1--C4 select the same displacement and the same
insertion token on both sides, their surviving candidate sets satisfy
$T_2 = \phi(T_1)$, and executing the branch at $e$ in $\sigma_1$ and at
$\phi(e)$ in $\sigma_2$ under the same assignment of new members to
insertion positions yields corresponding states.
\end{lemma}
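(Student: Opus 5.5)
The plan is to unfold Definition~\ref{def:correspond} and check each quantity read by the cascade. Because $\sigma_1$ and $\sigma_2$ correspond, the two machine states are literally equal: same CDLL, same pointer positions. Also $\mu_2 = \mu_1\circ\phi^{-1}$ and $C_2=\phi(C_1)$.

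\textbf{Step 1: membership transports.} From $\mu_2 = \mu_1\circ\phi^{-1}$ we get $\operatorname{dom}\mu_2=\phi(\operatorname{dom}\mu_1)$. Hence $M(\phi(e))=\phi(M(e))$ and $U(\phi(e))=\phi(U(e))$, and $e$ is waiting in $\sigma_1$ exactly when $\phi(e)$ is waiting in $\sigma_2$. Moreover $\mu_2(M(\phi(e)))=\mu_1(M(e))$ as a set of machine nodes.

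\textbf{Step 2: insertability and the cascade quantities.} Since the machines coincide, a displacement $\delta$ puts $\ptr{1},\dots,\ptr{i}$ on the same slots on both sides. So $\delta$ parks the leading pointers on $\mu_1(M(e))$ if and only if it parks them on $\mu_2(M(\phi(e)))$, which is the same set. The remaining cascade data then match:
\begin{itemize}
\item cost and the C1 tie-break tuple depend only on $\delta$;
\item $i=|M(e)|$ and $j=|U(e)|$ are preserved by the bijection $\phi$, so kind (C2) and the pair $(i,j)$ (C3) agree;
\item $\eta(\phi(e))=\eta(e)$, because $\xi$ is invariant by the argument of Lemma~\ref{lem:seed}.
\end{itemize}
Thus the candidate multisets, each candidate recorded with its key (cost tuple, kind, $(i,j)$, $\eta$), are carried onto each other by $e\mapsto\phi(e)$. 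Since $\phi$ is a bijection on waiting hyperedges, C1--C4 minimise over identical key sets. They therefore select the same $\delta$ and token, and the survivors satisfy $T_2=\phi(T_1)$. This is the point where the observation that no rule reads identifiers is used: each key is a function of the machine state and of $\phi$-invariant data only.

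\textbf{Step 3: executing a branch.} Apply the same movement block and the same token on both sides, so the machine states stay equal. The $j$ fresh nodes are identical machine nodes on both sides. If on side~1 $u\in U(e)$ is assigned to fresh node $z$, assign $\phi(u)$ to $z$ on side~2; this is what "the same assignment" means. Then $\mu_2' = \mu_1'\circ\phi^{-1}$ still holds on the extended domains, and $C_2'=\phi(C_1)\cup\{\phi(e)\}=\phi(C_1')$. So the new states correspond.

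The main obstacle is making the phrase "same assignment of new members to insertion positions" precise. The members of $U(e)$ and of $U(\phi(e))$ live in different hypergraphs, so the assignment has to be indexed by insertion positions and transported through $\phi$, as above. Once that is fixed, the invariant $\mu_2=\mu_1\circ\phi^{-1}$ holds by a one-line check, and everything else is bookkeeping.
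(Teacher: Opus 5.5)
Your proposal is correct and follows the paper's proof essentially step for step. Equal machine states together with $\mu_2=\mu_1\circ\phi^{-1}$ transport membership, waiting status, $|M(e)|$ and $|U(e)|$, so insertability and every C1--C4 key coincide, with $\eta$ invariant because $\phi$ preserves primal distances; extending $\mu$ by the $\phi$-transported assignment then keeps the successor states corresponding. Your explicit reading of ``the same assignment'' (send $\phi(u)$ to $z$ whenever $u$ is sent to $z$) is exactly what the paper means when it maps both sides' new members to the same fresh machine nodes at the same positions.
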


\begin{proof}
The machine states are equal, so a pointer displaced by $\delta$ rests
on the same machine node on both sides; by $\mu_2 = \mu_1 \circ
\phi^{-1}$ the node it represents in $H_2$ is the $\phi$-image of the
node it represents in $H_1$.  Membership, $|M(e)|$, $|U(e)|$, arity and
waiting status are therefore transported by $\phi$, which gives the
equivalence and the equality of cost, kind and $(i, j)$; and
$\eta(\phi(e)) = \eta(e)$ because $\phi$ preserves primal distances.  So
C1--C4 read equal values, whence the same choice and
$T_2 = \phi(T_1)$.  In the branch both sides map their $j$ new members
to the same $j$ fresh machine nodes at the same positions, so
$\mu_2' = \mu_1' \circ \phi^{-1}$ and $C_2' = \phi(C_1')$: the successor
states correspond.
\end{proof}

\begin{lemma}[Equal completions]
\label{lem:complete}
Corresponding states return the same string.
\end{lemma}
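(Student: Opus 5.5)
We argue by induction on the number of waiting hyperedges, $m - |C|$, which is the same for both corresponding states because $C_2 = \phi(C_1)$ and $\phi$ is a bijection on hyperedges. Here "the string returned" from a state means the lexicographically smallest completed string over all C5 branches explored from that state. It is prefixed, in the full run, by whatever was emitted before reaching the state. We also use that corresponding states have equal machine states, so any emitted prefix that depends only on the machine history is shared.

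\emph{Base case.} If $\sigma_1$ is terminal, $C_1 = E(H_1)$, hence $C_2 = \phi(E(H_1)) = E(H_2)$. Both runs halt and emit nothing further, so the completions are the same empty suffix.

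\emph{Inductive step.} Suppose $\sigma_1$ is non-terminal; by Lemma~\ref{lem:progress} it has an insertable hyperedge. Lemma~\ref{lem:step} now gives three facts.
\begin{enumerate}
\item C1--C4 select the same displacement $\delta$ and the same insertion token $t$ on both sides.
\item The surviving candidate sets satisfy $T_2 = \phi(T_1)$.
\item For each $e \in T_1$ and each assignment $\pi$ of the $j$ new members to the insertion positions, the branch $(e,\pi)$ in $\sigma_1$ and the branch $(\phi(e),\pi')$ in $\sigma_2$ lead to corresponding successor states.
\end{enumerate}
The movement block serialises $\delta$ by a fixed rule (retreats before advances, pointer index ascending), so both sides emit the identical local block followed by $t$.

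The branching over new-member assignments must be matched exactly. Assignments are bijections from $U(e)$ onto the $j$ fresh positions. We pair the assignment $\pi$ on the $H_1$ side with $\pi \circ \phi^{-1}$ on the $U(\phi(e)) = \phi(U(e))$ side. By Lemma~\ref{lem:step} these give $\mu_2' = \mu_1'\circ\phi^{-1}$, and the pairing is a bijection between the branch sets. Each successor has one fewer waiting hyperedge, so by the induction hypothesis paired branches return the same completed string.

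The set of completed strings from $\sigma_1$ is therefore
\[
  \{\, \text{block}(\delta)\, t\, s \,\}
\]
with $s$ ranging over the completions of the branches, and the set from $\sigma_2$ is identical. Their lexicographic minima coincide, which proves the lemma.

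\emph{Main obstacle.} The delicate point is the bookkeeping of the C5 branch sets. We must check that "branch at $e$ with assignment $\pi$" on one side corresponds bijectively to "branch at $\phi(e)$ with assignment $\pi\circ\phi^{-1}$" on the other, including the case $|T_1|>1$ where different candidates $e$ may carry different $(i,j)$ only if C3 failed to separate them. C3 forces a common token, so that case does not arise. We also have to confirm that no emitted token depends on a hyperedge or node identifier; this is exactly the observation made before Lemma~\ref{lem:progress}, and we invoke it there.
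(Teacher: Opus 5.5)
Your proof is correct and is essentially the paper's argument: induction on $|E|-|C|$, with Lemma~\ref{lem:step} giving the common movement block and insertion token, plus a $\phi$-induced bijection between the C5 branch sets (your explicit pairing $\pi \leftrightarrow \pi\circ\phi^{-1}$ is exactly the paper's ``same assignment of new members to insertion positions''), so both sides minimise over the same finite set of strings. Your aside about shared emitted prefixes is unnecessary and is not implied by equal machine states, but nothing depends on it, because the lemma concerns only the completions from the two states.
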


\begin{proof}
Induction on $|E| - |C|$, which is equal on both sides.  At $0$ both
return the empty string.  Otherwise the emitted movement block and
insertion token are equal by Lemma~\ref{lem:step}.  C5 branches over the
surviving candidates and over the assignments of new members to
insertion positions; by Lemma~\ref{lem:step} these two branch sets are
in bijection through $\phi$, and corresponding branches lead to
corresponding successor states, whose completions are equal by the
induction hypothesis.  Both sides thus minimise over the same finite set
of strings, and the lexicographic minimum of a set is determined by the
set.
\end{proof}
\end{addedblock}

\begin{theorem}[Canonical completeness]
\label{conj:complete}
For all connected hypergraphs $H_1, H_2$ of arity at most $k$,
\(
  H_1 \iso H_2 \iff \wstar_{H_1} = \wstar_{H_2} .
\)
\end{theorem}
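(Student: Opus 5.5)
The two directions need different tools. The forward direction rests on the equivariance lemmas just proved, and the reverse direction is essentially Theorem~\ref{conj:roundtrip}.

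\emph{Forward direction} ($H_1 \iso H_2 \Rightarrow \wstar_{H_1} = \wstar_{H_2}$).

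Fix an isomorphism $\phi : H_1 \to H_2$. By Lemma~\ref{lem:seed}, $\phi$ restricts to a bijection between the seed sets $S_1 = \arg\max_{\mathrm{lex}} \xi$ of $H_1$ and $S_2$ of $H_2$.

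For $v_0 \in S_1$, consider the initial encoder states of $\HTS(H_1, v_0)$ and $\HTS(H_2, \phi(v_0))$. Both machines consist of a single node with all pointers on it, so the machine states are equal. The first has $\mu_1 = \{v_0 \mapsto u_0\}$ and the second $\mu_2 = \{\phi(v_0) \mapsto u_0\}$, hence $\mu_2 = \mu_1 \circ \phi^{-1}$. Both have $C = \varnothing$, so $C_2 = \phi(C_1)$. The two initial states therefore correspond under $\phi$ in the sense of Definition~\ref{def:correspond}.

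Lemma~\ref{lem:complete} then gives $\HTS(H_1, v_0) = \HTS(H_2, \phi(v_0))$. Here each run is the lexicographic minimum over all C5 branches, which is exactly the quantity Lemma~\ref{lem:complete} compares.

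The two sets of strings over which Definition~\ref{def:canonical} takes the lexmin are therefore equal as sets, indexed through the bijection $\phi|_{S_1}$. Hence $\wstar_{H_1} = \wstar_{H_2}$.

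\emph{Reverse direction} ($\wstar_{H_1} = \wstar_{H_2} \Rightarrow H_1 \iso H_2$).

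$\wstar_{H}$ equals $\HTS(H, v_0)$ for some seed $v_0$; the lexmin over C5 branches is itself one completed branch string. Theorem~\ref{conj:roundtrip} gives $\STH(\wstar_H, k) \iso H$.

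Its proof fixes an arbitrary run, and hence covers every branch. This matters here, because the selected string is a particular branch and not merely "some" greedy output.

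Since $\STH$ is a deterministic function of the string, equal strings decode to the same hypergraph. This gives
\[
H_1 \iso \STH(\wstar_{H_1},k) = \STH(\wstar_{H_2},k) \iso H_2 .
\]

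\emph{Main obstacle.} I expect the main care to go into the forward direction, in checking that Lemma~\ref{lem:complete} applies exactly as needed. Its statement concerns the branch-minimised string of a state, and we must confirm this matches the convention in Definition~\ref{def:canonical} that each $\HTS$ run returns its lexicographically smallest completed string.

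We must also confirm that the lemma's conclusion does not depend on hidden identifiers. In particular, the fresh machine nodes must be named identically on both sides, e.g.\ by creation order. This holds because the corresponding states share an identical machine.

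Once these conventions are pinned down, the remaining steps are routine set-level bookkeeping.
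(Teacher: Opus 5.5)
Your proposal is correct and follows the paper's proof essentially step for step. The forward direction combines Lemma~\ref{lem:seed} with Lemma~\ref{lem:complete} applied to corresponding initial states, and the reverse direction applies Theorem~\ref{conj:roundtrip} to both inputs and uses that $\STH$ is deterministic. Your extra checks (that the initial states correspond in the sense of Definition~\ref{def:correspond}, and that the branch-minimal string is itself the output of a run covered by the round-trip proof) only make explicit what the paper's shorter write-up leaves implicit.
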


\begin{addedblock}
\begin{proof}
$(\Rightarrow)$  Let $\phi : H_1 \to H_2$ be an isomorphism.  By
Lemma~\ref{lem:seed} it maps the seed set of $H_1$ onto the seed set of
$H_2$.  For a seed $v_0$ the initial states of $\HTS(H_1, v_0)$ and
$\HTS(H_2, \phi(v_0))$ correspond under $\phi$, so
$\HTS(H_1, v_0) = \HTS(H_2, \phi(v_0))$ by Lemma~\ref{lem:complete}.
The two sets of candidate strings are therefore equal, and so are their
lexicographic minima.

$(\Leftarrow)$  Put $w = \wstar_{H_1} = \wstar_{H_2}$.  Then $w$ is the
output of a run of $\HTS$ on $H_1$ and of a run on $H_2$, so
Theorem~\ref{conj:roundtrip} gives $\STH(w, k) \iso H_1$ and
$\STH(w, k) \iso H_2$.  $\STH$ is deterministic, hence
$H_1 \iso H_2$.
\end{proof}

\begin{remark}[What the two directions cost]
\label{rem:directions}
$(\Leftarrow)$ uses only Theorem~\ref{conj:roundtrip}: whichever string a
tie-break returns, it decodes to a copy of the input, so equal strings
certify isomorphism under any tie-break rule.  $(\Rightarrow)$ is the
direction that needs C5 to branch over every candidate surviving
C1--C4.  A variant that resolves the residual tie by a node or hyperedge
identifier is not a function of the abstract hypergraph --- two
presentations of one hypergraph can then receive different strings ---
and for it $(\Rightarrow)$ is false.
\end{remark}

\begin{remark}[Node labels]
\label{rem:labelscope}
Both theorems are stated for the unlabelled inputs of this paper.  The
string never records the label of the start node, since $V$ tokens carry
only the labels of the nodes they create and $C$ tokens carry none.
Over a non-trivial vocabulary one therefore refines the seed rule to
$\arg\max_{\mathrm{lex}} (\xi(v), \ell(v))$ and compares the pair
$(\ell(v_0), \wstar_H)$, whose first component is then common to all
seeds; the proofs carry over unchanged.
\end{remark}
\end{addedblock}

\deleted{The forward direction would follow from the label-blindness of the
canonical search: an isomorphism $\phi$ maps the maximal-$\xi$ seeds of
$H_1$ onto those of $H_2$, preserves every quantity the cascade C1--C4
inspects (displacement costs depend only on the evolving CDLL geometry,
which is identical on both sides, and $\eta$ is
isomorphism-invariant), and maps the C5 branch sets onto each other; the
two hypergraphs then generate the same set of candidate strings and
hence the same lexicographic minimum.  The backward direction would
follow from Conjecture~\ref{conj:roundtrip} by transitivity:
$H_1 \iso \STH(\wstar_{H_1}) = \STH(\wstar_{H_2}) \iso H_2$.  What a
proof must establish rigorously is that the greedy-plus-backtracking
search depends only on the abstract incidence structure of the input,
never on node identifiers.  We leave this verification as future work;
Section~\ref{sec:results} reports the supporting evidence, with no
counterexample found.}

\begin{remark}[Hardness and cost]
\label{rem:hardness}
Hypergraph isomorphism is polynomial-time equivalent to graph
isomorphism, so \replaced{by}{under} \replaced{Theorem}{Conjecture}~\ref{conj:complete} computing
$\wstar_H$ is at least as hard as graph isomorphism, and no
polynomial-time algorithm should be expected.  The concrete search of
Definition~\ref{def:canonical} multiplies up to $n$ seed runs by the C5
branches, whose count is bounded by the product of $j!$ over the $V$
emissions of a run; we make no worst-case claim beyond this bound and
measure the cost empirically in Section~\ref{sec:results}.
\end{remark}

\begin{remark}[Parameters]
\label{rem:params}
Two canonical strings are comparable only when computed with the same
pointer count $k$; we use the maximal arity of the compared hypergraphs.
Disconnected inputs are rejected --- a per-component encoding with a
lexicographic merge is the natural extension but is not developed here.
\end{remark}

\section{Computational experiments\label{sec:experiments}}
The experiments answer two questions: does the round-trip property of
\replaced{Theorem}{Conjecture}~\ref{conj:roundtrip} hold in practice, and where does the
native canonical algorithm of Definition~\ref{def:canonical} stand,
in correctness and in cost, against the Levi-reduction baselines on
identical inputs.  \S\ref{sec:data} specifies the data cohort and
\S\ref{sec:setup} the measurement protocol.

\subsection{Data: connected uniform random hypergraphs}
\label{sec:data}

All instances are drawn from the $r$-uniform Erd\H{o}s--R\'enyi model:
given $n$ nodes and an inclusion probability $p$, each of the
$\binom{n}{r}$ candidate hyperedges of arity $r$ enters the hypergraph
independently with probability $p$ \citep{chodrow2020configuration}, as
implemented in the XGI library \citep{landry2023xgi}.  We parameterise
density by the expected number of hyperedges per node, $c$, setting
$p = c \, n / \binom{n}{r}$ so that $\mathbb{E}[m] = c \, n$, and we
report $c$ as the density axis.

Because the encoder requires connected input
(Remark~\ref{rem:params}) while the Levi baselines accept any input, a
raw Erd\H{o}s--R\'enyi sample would hand the two families different effective
workloads.  We therefore condition the generator on connectivity:
samples whose primal graph is disconnected are rejected and redrawn
under a deterministic seed walk ($\mathrm{seed},\ \mathrm{seed} +
1{,}000{,}003,\ \mathrm{seed} + 2 \cdot 1{,}000{,}003, \dots$), so every
method fingerprints exactly the same connected hypergraph and the
cohort remains reproducible from the seed list alone.  On the grid
below, acceptance typically requires one to three draws.  We call the resulting
distribution \emph{uniform Erd\H{o}s--R\'enyi conditional on
connectivity}.

The cohort sweeps three axes: node count
$n \in \{8, 12, 16, 20, 25\}$, arity $r = 3$, and density
$c \in \{1.0, 1.5, 2.0\}$, with ten seeds
($\mathrm{seed} \in \{0, \dots, 9\}$) per $(n, c)$ cell --- $150$
instances in total.  Figure~\ref{fig:cohort} shows how each axis
deforms the instances.  Arity $3$ is the regime of the classical
design-theory literature (Steiner triple systems are $3$-uniform), and
the $n$ and $c$ ranges were fixed in preliminary runs as the largest
grid on which every method, including \IsalHG{}, terminates within the
$600$\,s per-fingerprint budget on every instance; the ceiling is set
by \IsalHG{}, not by the baselines, a point
Section~\ref{sec:discussion} returns to.  For the correctness protocol,
each instance $H$ is paired with $\sigma(H)$ for a vertex permutation
$\sigma$ drawn uniformly from the cell's pinned generator, giving $150$
isomorphic pairs with a known certificate.

\begin{figure}[t]
  \centering
  \includegraphics[width=\textwidth]{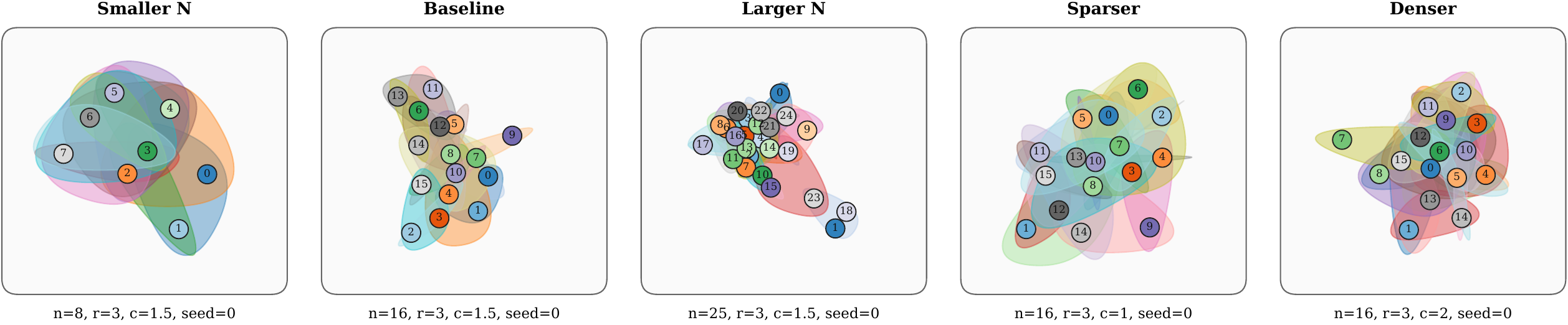}
  \caption{The cohort axes at seed $0$.  Centre-left: the baseline cell
  ($n = 16$, $r = 3$, $c = 1.5$).  Moving left and right varies the
  node count ($n = 8$ and $n = 25$ at the same density); the two
  right-hand panels vary the density at $n = 16$ ($c = 1$ and $c = 2$).
  Hyperedges are drawn as shaded regions over the numbered nodes.  All
  five instances are connected by construction under the
  reject-resample policy of \S\ref{sec:data}.}
  \label{fig:cohort}
\end{figure}

\subsection{Experimental setup}
\label{sec:setup}

\paragraph{Methods under test.}
Four methods compute one fingerprint per hypergraph and decide
isomorphism by fingerprint equality.  \IsalHG{} computes the canonical
string of Definition~\ref{def:canonical}; the search is implemented in
C++17 behind a Python interface.  The three baselines translate $H$
into its Levi graph $\levi(H)$ with the standard $2$-colouring
(node-vertices versus hyperedge-vertices) and canonically label it with,
respectively, nauty~2.8.8 through the \texttt{pynauty} binding,
bliss~0.77 through \texttt{python-igraph}, and Traces through the
\texttt{dreadnaut} interface of the nauty~2.9.1 distribution, whose
canonical graph line serves as the fingerprint.

The comparison is exact-versus-exact: a baseline must solve the same
problem \IsalHG{} solves: an isomorphism decision with zero false
positives and zero false negatives on every pair, while existing
as runnable software.  These two requirements pin the baseline set
exactly.  The Levi reduction driven by nauty, Traces, or bliss is, as
discussed in \S\ref{sec:rel-exact-tools}, the only exact hypergraph
isomorphism procedure with public implementations, and design-theory
systems (SageMath, GAP) delegate to nauty internally, so benchmarking
the three engines directly also covers the tools built on them.  The
theoretical exact algorithms of \S\ref{sec:rel-exact-theory} have no
implementations to run.  The Weisfeiler--Leman family of
\S\ref{sec:rel-wl} is excluded on principle rather than availability:
those methods compute incomplete invariants, so a runtime comparison
against them would compare answers to different questions; their own
published counterexamples already locate the correctness difference.

\paragraph{Measurements.}
For each of the $600$ (method, instance) pairs we record the
fingerprint wall-clock time as the median of $10$ repeated calls
(\texttt{time.perf\_counter}), its interquartile range, the peak
resident-set increment of the call
(\texttt{getrusage(RUSAGE\_SELF)}), and the fingerprint length in
bytes.  A $600$\,s watchdog per fingerprint converts non-termination
into an explicit DNF record.  Because Traces runs as a subprocess, its
memory column measures the parent process only; its wall-clock
includes the subprocess start-up.  Both caveats are flagged where they
matter.

\paragraph{Correctness protocol.}
Three checks run on top of the measurements.  (i)~\emph{Positive
pairs}: every method must return \texttt{true} on
$(H, \sigma(H))$ for each of the $150$ instances --- $600$ checks in
total.  (ii)~\emph{Cross-method agreement}: within every $(n, c)$ cell,
the iso-equivalence partition that each method's fingerprints induce on
the cell's ten instances must coincide across the four methods.
(iii)~\emph{Round-trip}: for every cohort instance and for five named
designs (the Fano plane STS(7), STS(9), the two non-isomorphic
\replaced{cyclic triple orbits $C_{13}(0,1,4)$ and $C_{13}(0,1,6)$,
each a \emph{partial} triple system of $13$ blocks on $13$ points
rather than a Steiner triple system}{cyclic
STS(13) with starter blocks $\{0,1,4\}$ and $\{0,1,6\}$}, and the
generalized quadrangle GQ(2,2)), the canonical string is decoded with
$\STH$ and the result is compared against the original hypergraph with
nauty as the independent oracle.

\paragraph{Execution.}
The timing sweep ran on the CPU partition of the Picasso supercomputer
(SCBI, University of M\'alaga) as one single-core SLURM array task per
(method, instance) pair, $600$ tasks in total, with $16$\,GB of memory
per task; every task completed.  The C++ engine was compiled with GCC
at \texttt{-O3}.  The round-trip checks, which assert a boolean rather
than a timing, ran on a development workstation.  All seeds are pinned,
and the cohort, the configuration, and the per-cell result records are
archived with the project.

\section{Results\label{sec:results}}
We report the round-trip verification (\S\ref{sec:res-roundtrip}), the
correctness agreement across the four methods
(\S\ref{sec:res-agreement}), the wall-clock and memory characterisation
(\S\ref{sec:res-runtime}), and the fingerprint lengths
(\S\ref{sec:res-fplen}).

\subsection{Round-trip verification}
\label{sec:res-roundtrip}

The round-trip check passes on every tested input: all $150$ cohort
instances and all five named designs satisfy
$\STH(\wstar_H, k) \iso H$, with the isomorphism confirmed by nauty on
the Levi graphs (Table~\ref{tab:roundtrip}).  Canonical strings range
from $98$ to $1{,}556$ serialised bytes over the cohort (median $174$
bytes at $n = 8$, $982$ bytes at $n = 25$).
Figures~\ref{fig:h2s-steps} and~\ref{fig:s2h-steps} display one such
round trip on the Fano plane: the encoder produces the $18$-token canonical
string of Example~\ref{exa:fano}, and $\STH$ decodes it back into a
hypergraph isomorphic to the input.  These results \replaced{confirm
Theorem}{support
Conjecture}~\ref{conj:roundtrip} on $155$ structurally diverse inputs\replaced{,
and check the implementation against the proof}{;
they do not prove it}.

\begin{table}[t]
  \centering
  \caption{Round-trip verification.  Each input is encoded to its
  canonical string $\wstar_H$, decoded with $\STH$, and compared
  against the original with nauty as the independent oracle.  The two
  \replaced{cyclic triple orbits $C_{13}$}{cyclic STS(13)} are not isomorphic to each other and receive distinct
  canonical strings.\added{  $C_{13}(b)$ is the orbit of the starter
  block $b$ under $x \mapsto x + 1 \bmod 13$: $13$ blocks covering $39$
  of the $78$ point pairs, so it is a \emph{partial} triple system, not
  a Steiner triple system (which would need $26$ blocks).}}
  \label{tab:roundtrip}
  \begin{tabular}{@{}lrrcr@{}}
    \toprule
    Input & $n$ & $m$ & $\STH(\wstar_H) \iso H$ & $|\wstar_H|$ (bytes) \\
    \midrule
    Connected uniform ER cohort & $8$--$25$ & $6$--$67$ & $150/150$ & $98$--$1{,}556$ \\
    Fano plane STS(7)           & $7$  & $7$  & yes & $121$ \\
    STS(9)                      & $9$  & $12$ & yes & $227$ \\
    \replaced{$C_{13}(0,1,4)$}{Cyclic STS(13), starter $\{0,1,4\}$} & $13$ & $13$ & yes & $258$ \\
    \replaced{$C_{13}(0,1,6)$}{Cyclic STS(13), starter $\{0,1,6\}$} & $13$ & $13$ & yes & $263$ \\
    GQ(2,2)                     & $15$ & $15$ & yes & $278$ \\
    \bottomrule
  \end{tabular}
\end{table}

\subsection{Correctness agreement}
\label{sec:res-agreement}

All $600$ positive-pair checks pass: on every one of the $150$
$(H, \sigma(H))$ pairs, all four methods --- \IsalHG{}, nauty, Traces,
and bliss --- confirm isomorphism on every pair, with zero failures.  Within every $(n, c)$ cell, the four methods also induce
identical iso-equivalence partitions over the cell's ten instances, so
no pair exists anywhere in the cohort on which \IsalHG{} and a Levi
baseline disagree, in either direction.  The named designs add a
discrimination check at the other pole of the symmetry spectrum: the
two non-isomorphic \replaced{cyclic triple orbits $C_{13}(0,1,4)$ and
$C_{13}(0,1,6)$, which are both $3$-uniform, $3$-regular and
vertex-transitive and therefore agree}{cyclic STS(13), which agree} on the trivial
invariants ($n$, $m$, degree and arity sequences), receive distinct
canonical strings (Table~\ref{tab:roundtrip}).  Across everything we
tested, \replaced{the verdicts match Theorem~\ref{conj:complete}
exactly}{no counterexample to Conjecture~\ref{conj:complete} appeared}.
We note the limits of this evidence: the cohort is small
($n \leq 25$, $r = 3$), and random hypergraphs are rigid with high
probability, so the hard cases for canonical algorithms ---
large-automorphism designs at scale --- are only touched by the five
fixtures of Table~\ref{tab:roundtrip}.

\subsection{Runtime and memory}
\label{sec:res-runtime}

Table~\ref{tab:runtime} and Figure~\ref{fig:wallclock} report
wall-clock time; every cell of the grid completed for every method.  The Levi baselines dominate
the entire grid.  nauty is the fastest method on every cell, between
$0.048$ and $0.201$\,ms per fingerprint; bliss runs within a
factor of $2$--$4$ of nauty; Traces stays at a flat $\approx 3$\,ms,
which its fitted exponent ($n^{-0.02}$) identifies as the \texttt{dreadnaut}
subprocess start-up floor rather than algorithmic cost.  \IsalHG{}
spans $16.9$\,ms to $22.8$\,s over the same grid.  The fitted growth
exponents separate the regimes: $n^{5.79}$ for \IsalHG{} (pooled over
$c$; $n^{6.2}$ at $c = 2$ alone) against $n^{0.93}$ for nauty and
$n^{0.80}$ for bliss.  The per-cell geometric-mean ratio of \IsalHG{}
to the best Levi engine grows from $311\times$ at the smallest, densest
cell to $117{,}672\times$ at $(n, c) = (25, 1.5)$, roughly tripling
with each step in $n$.  No crossover regime appears anywhere in the
tested grid, and the exponent gap implies none will appear at larger
$n$ under the current canonical search.

\begin{table}[t]
  \centering
  \caption{Median wall-clock per fingerprint (ms; median over ten
  seeds, ten repeats per seed) for the fifteen $(n, c)$ cells of the
  $r = 3$ grid, and the geometric-mean ratio of \IsalHG{} over the
  best Levi engine per instance.  No method timed out on any instance
  ($600$\,s budget).}
  \label{tab:runtime}
  \begin{tabular}{@{}rr rrrr r@{}}
    \toprule
    $n$ & $c$ & \IsalHG{} & Levi (nauty) & Levi (bliss) & Levi (Traces) & ratio \\
    \midrule
     $8$ & $1.0$ & $16.9$     & $0.048$ & $0.12$ & $3.22$ & $315\times$ \\
     $8$ & $1.5$ & $21.2$     & $0.059$ & $0.14$ & $3.27$ & $345\times$ \\
     $8$ & $2.0$ & $22.3$     & $0.073$ & $0.16$ & $2.91$ & $311\times$ \\
    $12$ & $1.0$ & $42.7$     & $0.071$ & $0.17$ & $3.31$ & $548\times$ \\
    $12$ & $1.5$ & $44.9$     & $0.078$ & $0.18$ & $3.08$ & $541\times$ \\
    $12$ & $2.0$ & $55.9$     & $0.100$ & $0.24$ & $3.30$ & $592\times$ \\
    $16$ & $1.0$ & $176.7$    & $0.088$ & $0.20$ & $3.20$ & $1{,}985\times$ \\
    $16$ & $1.5$ & $241.5$    & $0.111$ & $0.23$ & $3.06$ & $2{,}141\times$ \\
    $16$ & $2.0$ & $492.3$    & $0.129$ & $0.29$ & $3.46$ & $4{,}319\times$ \\
    $20$ & $1.0$ & $822.0$    & $0.099$ & $0.20$ & $2.72$ & $7{,}824\times$ \\
    $20$ & $1.5$ & $2{,}039.7$  & $0.138$ & $0.32$ & $3.37$ & $13{,}128\times$ \\
    $20$ & $2.0$ & $3{,}037.3$  & $0.162$ & $0.35$ & $3.17$ & $15{,}934\times$ \\
    $25$ & $1.0$ & $6{,}863.5$  & $0.145$ & $0.30$ & $3.21$ & $57{,}223\times$ \\
    $25$ & $1.5$ & $19{,}170.6$ & $0.176$ & $0.34$ & $2.92$ & $117{,}672\times$ \\
    $25$ & $2.0$ & $22{,}786.1$ & $0.201$ & $0.44$ & $3.17$ & $110{,}719\times$ \\
    \bottomrule
  \end{tabular}
\end{table}

\begin{figure}[t]
  \centering
  \includegraphics[width=0.92\textwidth]{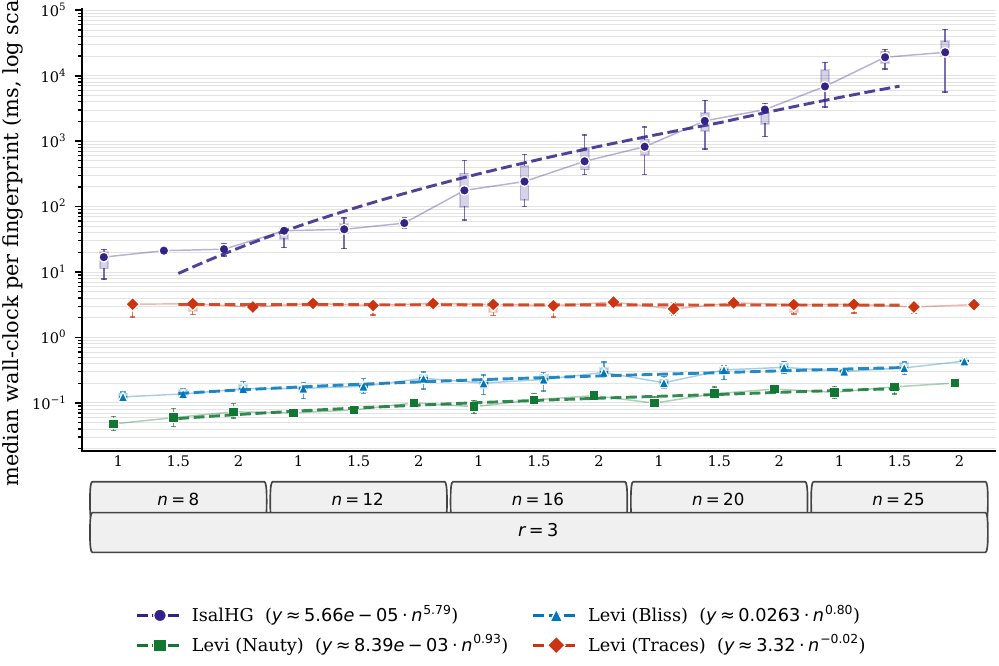}
  \caption{Median wall-clock per fingerprint (log scale) across the
  fifteen $(n, c)$ cells, grouped by $n$ with $c \in \{1, 1.5, 2\}$
  inside each group.  Markers are medians over ten seeds with
  interquartile ranges; dashed lines are per-method power-law fits
  $T \approx a \cdot n^{\beta}$ with the fitted exponents in the
  legend.  The flat Traces line is the subprocess start-up floor, not
  the algorithm.}
  \label{fig:wallclock}
\end{figure}

Memory does not separate the methods at this scale
(Figure~\ref{fig:memory}): the peak resident-set increment per
fingerprint stays below $2$\,MiB for every method on every cell, with
quantisation at the page-size granularity dominating the visible
variation.  The Traces column measures the parent process only
(\S\ref{sec:setup}), so its flat $0.4$\,MiB line excludes the
subprocess.

\begin{figure}[t]
  \centering
  \includegraphics[width=0.92\textwidth]{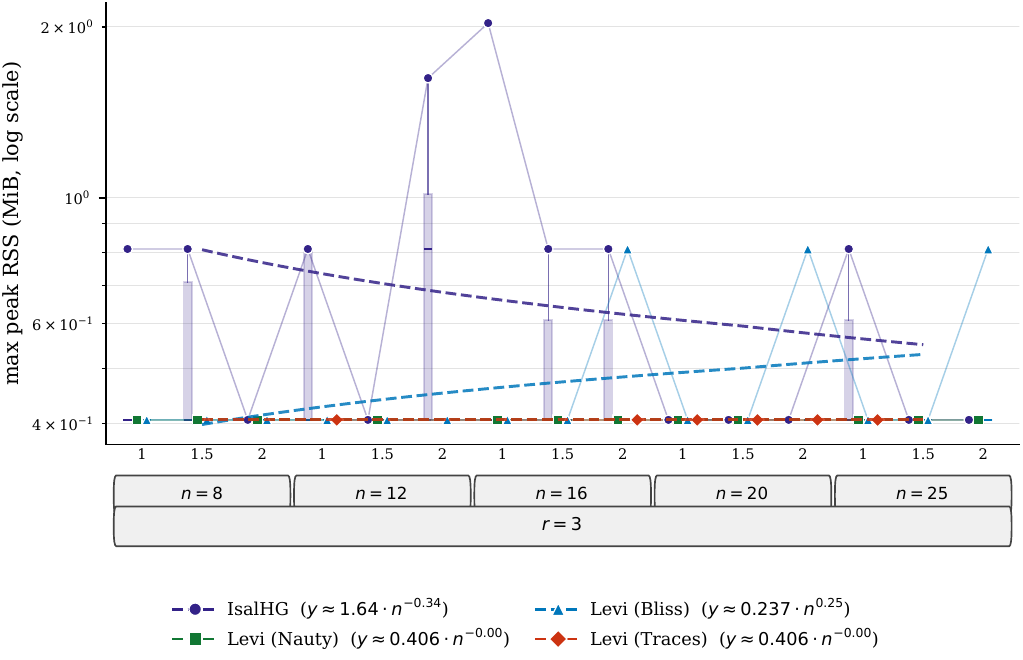}
  \caption{Peak resident-set increment per fingerprint (log scale)
  over the same grid as Figure~\ref{fig:wallclock}.  All methods stay
  below $2$\,MiB; page-granularity quantisation dominates the visible
  variation.  The Traces series measures the parent process only.}
  \label{fig:memory}
\end{figure}

\subsection{Fingerprint length}
\label{sec:res-fplen}

Figure~\ref{fig:fplen} compares the fingerprint sizes.  The canonical
string is comparable in length to the serialised canonical labellings of
the Levi engines: its median is $460$ bytes over the grid,
between nauty ($320$ bytes) and bliss ($784$ bytes), while Traces'
compact graph-line format is the smallest at $88$ bytes.  The
fitted growth is $n^{1.49}$ for \IsalHG{} against $n^{1.11}$ to
$n^{1.27}$ for the baselines --- consistent with
Remark~\ref{rem:length}: the string pays one token per hyperedge
($m = c \, n$ here) plus the pointer travel, which grows with the
CDLL length.  Fingerprint length is the one axis on which \IsalHG{} does not
trail the Levi baselines by orders of magnitude.

\begin{figure}[t]
  \centering
  \includegraphics[width=0.92\textwidth]{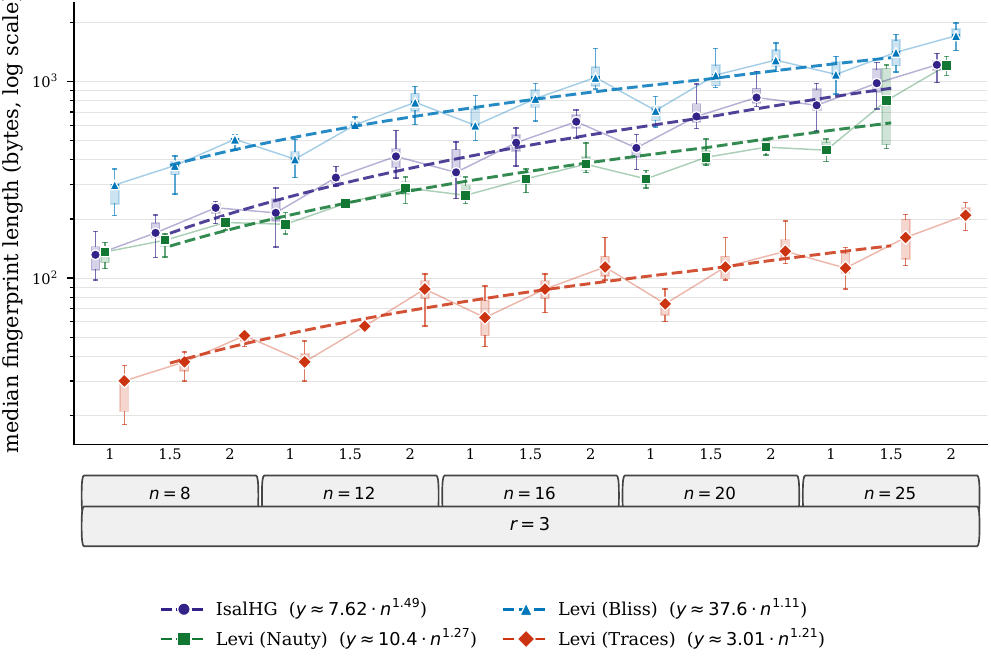}
  \caption{Median fingerprint length in bytes (log scale) over the
  same grid as Figure~\ref{fig:wallclock}, with per-method power-law
  fits in the legend.  \IsalHG{}'s canonical string lies between the
  serialised canonical labellings of nauty and bliss; Traces' graph
  line is the most compact.}
  \label{fig:fplen}
\end{figure}

\section{Discussion\label{sec:discussion}}
\paragraph{What the agreement establishes, and what it does not.}
The correctness outcome is uniform: $600$ positive-pair verdicts with
zero failures, identical partitions in every cell, a perfect round-trip
record, and distinct canonical strings for the two non-isomorphic
\replaced{cyclic triple orbits $C_{13}$}{STS(13)}.  Everything we measured is consistent with
\replaced{Theorems}{Conjectures}~\ref{conj:roundtrip} and~\ref{conj:complete}, and the
experiments were designed so that a single disagreement with any of the
three Levi engines would have surfaced as an explicit counterexample.
None did.  The evidence remains bounded: the cohort is small and, being
random, rigid with high probability, so the regime where canonical
algorithms historically fail --- large automorphism groups at scale ---
is represented only by the five designs of Table~\ref{tab:roundtrip}.
We leave the extension of the correctness campaign to the published
design catalogues for a separate empirical paper; formal proofs belong
to a theoretical one.

\paragraph{Reading the runtime gap.}
The gap is large and it grows: $311\times$ at $(n, c) = (8, 1)$,
$117{,}672\times$ at $(25, 1.5)$, with fitted exponents $n^{5.79}$
against $n^{0.93}$ (Figure~\ref{fig:wallclock}).  Two properties of the
canonical search of Definition~\ref{def:canonical} account for the
shape.  First, the seed rule runs the greedy once per maximal-$\xi$
node, and on near-regular random hypergraphs the maximal-$\xi$ class is
large, so the multiplier approaches $n$.  Second, every $V$ emission
branches over the assignments of its new nodes (cascade rule C5), and
the branch count compounds along the string, bounded by the product of
$j!$ over the $V$ emissions (Remark~\ref{rem:hardness}).  Both costs
are intrinsic to the search discipline, not to its implementation: the
measured engine is compiled C++, and the exponent gap means no constant
factor will close it.  During development we also evaluated several
encoder variants --- a single-seed greedy, colour-refinement pruning of
the branch sets, and an exhaustive search --- and none of them changed
the competitive picture on this cohort.

\paragraph{Fairness in both directions.}
The comparison flatters neither side.  nauty, Traces, and bliss embody
four decades of individualisation--refinement engineering
\citep{mckay1981practical, mckay2014practical, junttila2007bliss},
whereas the canonical algorithm measured here is the first
implementation of a new representation; we did not expect it to
match mature engines.  In the other direction, two
measurement artefacts favour neither side but bear noting: the
flat $3$\,ms Traces line is subprocess start-up rather than solver
time, so Traces would track the other engines if invoked in-process,
and the memory columns are uninformative at this scale (all methods
below $2$\,MiB), with the Traces value measuring the parent process
only.

\paragraph{Paths forward.}
The measurements identify the two components that dominate the cost.
The seed multiplier requires stronger isomorphism-invariant seed
selectors than depth-3 $\xi$ --- the design-theory practice of
pre-partitioning by configuration counts \citep{kaski2004steiner} is
the natural template.  The branch compounding requires a search
discipline that prunes equivalent branches instead of enumerating
them, in the spirit of the individualisation--refinement tree
\citep{mckay2014practical} and of the canonisation framework of
\citet{schweitzer2019unifying}, while keeping the canonical string as
the output object.  Both directions preserve the contribution of this
paper --- the representation and \replaced{its proved completeness}{its conjectured completeness} --- and
replace only the search that computes $\wstar_H$.\added{  A faster
search must keep the C5 branch set intact, since
Remark~\ref{rem:directions} shows that a tie broken by a node or
hyperedge identifier loses completeness.}

\paragraph{Scope limits.}
Four threats to validity delimit the claims.  The grid fixes $r = 3$
and $n \leq 25$, so nothing here speaks to higher arities or larger
inputs except the fitted exponents.  The cohort is conditioned on
connectivity, a distribution choice made so that all methods receive
identical inputs; raw Erd\H{o}s--R\'enyi sampling would instead route
disconnected instances away from \IsalHG{} (Remark~\ref{rem:params}).
The timing hardware is a single CPU-partition node class, and the
$600$\,s budget, although never reached, caps what the grid could have
explored.  Finally, fingerprint length (Figure~\ref{fig:fplen}) is the
only axis where the native encoding is presently competitive; we
report it as an observation, not as a claim of advantage.

\section{Conclusion\label{sec:conclusion}}
\paragraph{Summary of contributions.}
This paper has introduced \IsalHG{}, a sequential instruction-based
representation of finite connected hypergraphs of bounded arity.  A virtual machine defines the encoding: it manipulates a circular
doubly-linked list of node references through $k$ traversal pointers,
inserting hyperedges --- and, through the $V_{i,j}$ tokens, new nodes
--- as instructions execute.  We establish four points:
\begin{enumerate}[label=(\roman*)]
  \item \emph{A native representation with a closed language.}  Every
        string over $\SigHG$ decodes to a valid hypergraph
        (Proposition~\ref{prop:closure}), and the alphabet size grows
        only quadratically in the arity bound,
        $|\SigHG| = k(k-1)/2 + 3k + 1$.  This is, to our knowledge, the
        first executable-string representation of hypergraphs intended
        as a canonical isomorphism invariant.
  \item \emph{Round-trip fidelity, \replaced{proved and verified}{verified}.}  The greedy $\HTS$
        encoder and the $\STH$ interpreter invert each other up to
        isomorphism\added{ (Theorem~\ref{conj:roundtrip})} on every tested input: $150$ connected random
        uniform hypergraphs and five named designs, confirmed
        independently by nauty (\replaced{Table}{Conjecture~\ref{conj:roundtrip};
        Table}~\ref{tab:roundtrip}).
  \item \emph{Canonical completeness, \replaced{proved}{conjectured and unrefuted}.}  \added{The canonical
        string is a complete isomorphism invariant
        (Theorem~\ref{conj:complete}): equal strings certify isomorphism
        by round-trip, and isomorphic inputs drive the search through
        matching branches.}  The
        canonical string $\wstar_H$ agreed with nauty, Traces, and
        bliss on all $600$ isomorphism verdicts of the cohort and
        separated the two non-isomorphic \replaced{cyclic triple orbits
        $C_{13}$}{STS(13)}\deleted{
        (Conjecture~\ref{conj:complete}); no counterexample was found
        anywhere in the campaign}.
  \item \emph{An honest cost characterisation.}  On the tested grid the
        Levi baselines dominate wall-clock by three to five orders of
        magnitude (geometric-mean ratio $311\times$ to
        $117{,}672\times$), with fitted growth $n^{5.79}$ for the
        current canonical search against $n^{0.93}$ for nauty.
        Fingerprint length is the one axis where the native encoding
        is already comparable ($460$ versus $320$--$784$ median
        bytes).
\end{enumerate}

\paragraph{Limitations.}
\deleted{The two central conjectures --- round-trip fidelity
(Conjecture~\ref{conj:roundtrip}) and canonical completeness
(Conjecture~\ref{conj:complete}) --- remain unproven; we defer their proofs to a dedicated theoretical paper.}  The canonical search is
not competitive in runtime with the Levi pipeline anywhere in the
tested regime, and its fitted exponent implies the gap widens with
$n$.  The encoder requires connected input, hyperedge arities are
capped by the pointer count $k$, and the empirical evidence covers
$3$-uniform hypergraphs with $n \leq 25$ plus five design fixtures.

\paragraph{Future directions.}
\replaced{Our immediate priority is a}{Our immediate priorities are the formal proofs of both conjectures and a}
redesigned canonical search --- stronger structural seed invariants
and pruning of \replaced{branches that Theorem~\ref{conj:complete} shows
to be equivalent}{equivalent branches} --- that keeps the string as the
output object.  Our planned empirical campaign extends the correctness
evidence to the exhaustive Steiner-system catalogues, hard symmetric
families, and real-world hypergraph corpora, where the deduplication
workload also exercises the representation at scale.  On the
representation side, the labelled alphabet of Remark~\ref{rem:labels}
and the treatment of disconnected inputs complete the object class,
and the string form itself --- compact, closed, and sequential ---
suits the similarity-search and generative uses that \IsalGraph{}
and \IsalSR{} explore for their domains \citep{isalgraph2026,
isalsr2026}.

\section*{Acknowledgment}
The authors thankfully acknowledge the computer resources (Picasso
Supercomputer), technical expertise, and assistance provided by the
SCBI (Supercomputing and Bioinformatics) center of the University
of M\'alaga.

\bibliographystyle{unsrtnat}
\bibliography{references}

\end{document}